\newtheorem{theorem}{Theorem}
\begin{document}
\title{Optimized Cartesian $K$-Means}

\author{Jianfeng Wang, Jingdong Wang, Jingkuan Song, Xin-Shun Xu, Heng Tao Shen, Shipeng Li
\IEEEcompsocitemizethanks{
\IEEEcompsocthanksitem
Jianfeng Wang is with University of Science and Technology of China.\protect\\
Email: wjf2006@mail.ustc.edu.cn.
\IEEEcompsocthanksitem
Jingdong Wang and Shipeng Li are with Microsoft Research, Beijing, P.R. China.\protect\\
Emails:\{jingdw, spli\}@microsoft.com.
\IEEEcompsocthanksitem
Xin-Shun Xu is with Shandong University.\protect\\
Email: xuxinshun@sdu.edu.cn.
\IEEEcompsocthanksitem
Jingkuan Song and Heng Tao Shen are with School of Information Technology and Electrical Engineering,
The University of Queensland, Australia.\protect\\
Email:\{jk.song,shenht\}@itee.uq.edu.au.}}


\IEEEcompsoctitleabstractindextext{%
\begin{abstract}
Product quantization-based approaches are effective  to encode high-dimensional data points for approximate nearest neighbor search.
The space is decomposed into a Cartesian product of low-dimensional subspaces, each of which generates a sub codebook.
Data points are encoded as compact binary codes using these sub codebooks, and the distance between two data points can be approximated efficiently from their codes by the precomputed lookup tables.
Traditionally, to encode a subvector of a data point in a subspace, only one sub codeword in the corresponding sub codebook is selected, which may impose strict restrictions on the search accuracy.
In this paper, we propose a novel approach, named Optimized Cartesian $K$-Means (OCKM), to better encode the data points for more accurate approximate nearest neighbor search.
In OCKM, multiple sub codewords are used to encode the subvector of a data point in a subspace.
Each sub codeword stems from different sub codebooks in each subspace, which are optimally generated with regards to the minimization of the distortion errors.
The high-dimensional data point is then encoded as the concatenation of the indices of multiple sub codewords from all the subspaces.
This can provide more flexibility and lower distortion errors than traditional methods.
Experimental results on the standard real-life datasets demonstrate the superiority over state-of-the-art approaches for approximate nearest neighbor search.
\end{abstract}
\begin{keywords}
Clustering, Cartesian product, Nearest neighbor search
\end{keywords}}

\maketitle

\IEEEdisplaynotcompsoctitleabstractindextext

%
\IEEEpeerreviewmaketitle

\newcommand{\figSingleTwoWidth}{0.49}
\newcommand{\figDoubleThreeWidth}{0.32}

\section{Introduction}
Nearest neighbor (NN) search in large data
sets has wide applications in information retrieval, computer
vision, machine learning,
pattern recognition, recommendation system, etc.
However, exact NN search is often
intractable
because of the large scale of the database and
the curse of the high dimensionality.
Instead, approximate nearest neighbor (ANN) search is
more practical and can achieve orders of magnitude speed-ups than exact NN search with near-optimal accuracy~\cite{SDI06}.

There has been a lot of research interest on designing effective data structures,
such as $k$-d tree~\cite{FriedmanBF77},
randomized $k$-d forest~\cite{Silpa-AnanH08},
FLANN~\cite{MujaL09},
trinary-projection tree~\cite{JiaWZZH10, WangWJLZZH14},
and neighborhood graph search~\cite{AryaM93b, WangL12, WangWZGLG13a, WangWZGLG13b}.

The hashing algorithms
have been attracting a large amount of attentions
recently
as the storage cost is small
and the distance computation
is efficient.
Such approaches map data points to compact binary codes
through a hash function,
which can be generally expressed as
\begin{align*}
\mathbf{b} = \mathbf{h}(\mathbf{x}) \in \{0, 1\}^{L},
\end{align*}
where $\mathbf{x}$ is a $P$-dimensional real-valued point,
$\mathbf{h}(\cdot)$ is the hash function,
and $\mathbf{b}$ is a binary vector with $L$ entries.
For description convenience,
we will use a vector or a code
to name $\mathbf{b}$
interchangeably.

The pioneering hashing work, locality sensitive hashing (LSH)~\cite{DatarIIM04, IndykM98},
adopts random linear projections and the similarity preserving is probabilistically guaranteed.
Other approaches based on random functions include kernelized LSH~\cite{KulisG12}, non-metric LSH~\cite{MuY10}, LSH from shift-invariant kernels~\cite{RaginskyL09},
and super-bit LSH~\cite{JiLYZT12}.

To preserve some notion of similarities,
numerous efforts have been devoted to
finding a good hash function
by exploring the distribution
of the specific data set.
Typical approaches are
unsupervised hashing~\cite{GongL11, KongL12, KulisD09, StrechaBBF12, WangWYL13, WeissTF08,  XuWLZLY11, ZhuHCCS13}
and supervised hashing~\cite{liuWJ12, NorouziF11},
with kernelized version~\cite{HeLC10, LiuHLL12},
and extensions to multi-modality~\cite{SongYHSH11, song2013inter, ZhuHSZ13}, etc.
Those algorithms usually
use Hamming distance,
which is only able to produce a few distinct
distances,
resulting in limited ability and flexibility of distance
approximation.


The quantization-based algorithms have been shown
to achieve superior performances~\cite{JegouDS11, NorouziF13}.
The representative algorithms include
product quantization (PQ)~\cite{JegouDS11}
and Cartesian $K$-means (CKM)~\cite{NorouziF13},
which are modified versions of the conventional $K$-means algorithm~\cite{macqueen1967some}.
The quantization approaches typically learn a \textit{codebook} $\{\mathbf{d}_1, \cdots, \mathbf{d}_K\}$,
where each \textit{codeword} $\mathbf{d}_k$ is a $P$-dimensional vector.
The data point $\mathbf{x}$ is encoded in the following way,
\begin{align}
k^* = \arg\min\nolimits_{k \in \{1, 2, \cdots, K\}}{\|\mathbf{x} - \mathbf{d}_k\|_2^2},
\label{eqn:hash_func_non_decomposed}
\end{align}
where $\|\cdot\|_2$ denotes the $l_2$ norm. 
The index $k^*$ indicates which codeword is the closest to $\mathbf{x}$
and can be represented as a binary code of length $\lceil\log_2(K)\rceil$\footnote{In the following, we omit the $\lceil \cdot \rceil$
operator without affecting the understanding.}.

The crucial problem for quantization algorithms
is how to learn the codebook.
In the traditional $K$-means,
the codebook is composed of the cluster centers
with a minimal squared distortion error.
The drawbacks when applying $K$-means to ANN search
include that the size of the codebook is quite limited
and computing the distances between the query and the codewords
is expensive.
PQ~\cite{JegouDS11}
addresses this problem
by splitting the $P$-dimensional space into multiple disjoint subspaces
and making the codebook as the Cartesian product of the \textit{sub codebooks},
each of which is learned on each subspace using the conventional $K$-means algorithm.
The compact code is formed by concatenating
the indices of the selected sub codeword within each sub codebook.
CKM~\cite{NorouziF13} improves PQ
by optimally rotating the $P$ dimensional space to give a lower distortion error.



In PQ and
CKM,
only one sub codeword on each subvector is used to quantize the data points.
which results in limited capability of reducing the distortion error
and thus limited search accuracy.
In this paper,
we first present a simple algorithm,
extended Cartesian $K$-means (ECKM), which extends CKM
by using multiple (e.g., $C$) sub codewords for a data point from the sub codebook in each subspace.
Then, we propose
the optimized Cartesian $K$-means (OCKM) algorithm,
which learns $C$ sub codebooks in each subspace
instead of a single sub codebook like ECKM,
and selects $C$ sub codewords, each chosen from a different sub codebook.
We show that both  PQ and CKM 
are constrained versions of our OCKM 
under the same code length,
which suggests that
our OCKM can
lead to a lower quantization error
and thus a higher search accuracy.
Experimental results
also validate that
our OCKM
achieves superior performance.

The remainder of this paper is organized as follows.
Related work is first reviewed in Sec.~\ref{sec:preliminary}.
The proposed ECKM is introduced in Sec.~\ref{sec:sck_means}, followed by the OCKM in Sec.~\ref{sec:ock_means}.
Discussions and experimental results are given in Sec.~\ref{sec:discussion} and~\ref{sec:exp}, respectively.
Finally, a conclusion is made in Sec.~\ref{sec:conclusion}.

\section{Related work}\label{sec:preliminary}
Hashing is an emerging technique to represent the high-dimensional vectors as binary
codes for ANN search, and has
achieved a lot of success in multimedia applications,
e.g. image search~\cite{HeFLCLCC12, KuoCCH09}, video retrieval~\cite{CaoLMC12, SongYHSH11},
event detection~\cite{RevaudDSJ13}, document retrieval~\cite{SalakhutdinovH09}.

According to the form of the
hash function,
we roughly categorize the binary
encoding approaches as those based on  Hamming embedding
and  on quantization.
Roughly, the former adopts the Hamming distance 
as the dissimilarity between the codes, while the latter 
does not.

Table~\ref{tbl:notations} illustrates part of the
notations and descriptions used in the paper. Generally, we
use the uppercase unbolded symbol as
a constant,
the lowercase unbolded
as the index,
the uppercase bolded as the matrix
and
the lowercase bolded as the vector.

\begin{table}
\centering
\caption{Notations and descriptions.}
\label{tbl:notations}
\begin{tabular}{cc}
\toprule
Symbol & Description \\
\midrule
$N$ & number of training points \\
$P$ & dimension of training points \\
$M$ & number of subvectors \\
$S$ & number of dimensions on each subvector \\
$K$ & number of (sub) codewords \\
$m$ & index of the subvector \\
$i$ & index of the training point \\
$\mathbf{R}$ & rotation matrix \\
$\mathbf{D}^{m}$ & codebook on $m$-th subvector\\
$\mathbf{b}^{m}_{i}$ & $1$-of-$K$ encoding vector on $m$-th subvector\\
\bottomrule
\end{tabular}
\end{table}

\subsection{Hamming embedding}
Linear mapping is one of typical hash functions.
Each bit is calculated by
\begin{align}
h_i(\mathbf{x}) =
\operatorname{sign}(\mathbf{w}_i^{T}\mathbf{x} + u_i),
\end{align}
where $\mathbf{w}_i$ is the projection vector, $u_i$ is the offset,
and
$\operatorname{sign}(z)$ is a sign
function which is $1$ if $z>0$,
and $0$
otherwise.

Such approaches include
~\cite{DatarIIM04, GongL11, KongL12}.
The differences mainly reside in how to
obtain the parameters in the hash function.
For example, LSH~\cite{DatarIIM04}
adopts a random parameter and the
similarity is probability preserved.
Iterative quantization
hashing~\cite{GongL11}
constructs hash functions by rotating
the axes so that the difference between the binary codes
and the projected data is minimized.

Another widely-used approach
is the kernel-based hash
function~\cite{
HeLC10, KulisD09, KulisG12, 
LiuHLL12},
i.e.
\begin{align}
h_i(\mathbf{x}) =
\operatorname{sign}(\sum_{j}w_{ij}{
\kappa(\mathbf{x}, \mathbf{z}_j)}),
\end{align}
where $\mathbf{z}_j$ is the vector in
the same space with $\mathbf{x}$,
and $\kappa(\cdot, \cdot)$ is the kernel function.
The cosine function can also be used to generate the binary codes, such as 
in~\cite{WeissTF08}.

\subsection{Quantization}

In the quantization-based encoding methods, different constraints on the codeword 
lead to different approaches, i.e.
$K$-Means~\cite{Lloyd82, macqueen1967some},
Product Quantization (PQ)~\cite{JegouDS11} and
Cartesian $K$-Means (CKM)~\cite{NorouziF13}.

\subsubsection{$K$-Means}

Given $N$ $P$-dimensional points $\mathcal{X} = \{\mathbf{x}_1, \cdots, \mathbf{x}_N\}\subset \mathbb{R}^{P}$, the $K$-means algorithm partitions the database
into $K$ clusters, each of which associates
one codeword $\mathbf{d}_i\in
\mathbb{R}^{P}$. Let $\mathbf{D} =
[\mathbf{d}_1, \cdots, \mathbf{d}_K]
\subset
\mathbb{R}^{P}$ be the
corresponding codebook.
Then the codebook is learned by minimizing the within-cluster distortion, i.e.
\begin{align*}
\min~~&{\sum_{i = 1}^{N}{\|\mathbf{x}_i - \mathbf{D}\mathbf{b}_i\|_2^2}} \\
\operatorname{s.t.}~~&\mathbf{b}_i \in \{0, 1\}^{K} \\
& \|\mathbf{b}_i\|_1 = 1 ~~ i\in \{1, \cdots, N\} 
\end{align*}
where $\mathbf{b}_i$ is a $1$-of-$K$ encoding vector ($K$ dimensions with one $1$ and $K - 1$ $0$s. )
to indicate which codeword is used to quantize $\mathbf{x}_i$,
and $\|\cdot\|_1$ is the $l_1$ norm.

The problem can be solved 
by iteratively alternating
optimization with respect to $\mathbf{D}$ and  $\{\mathbf{b}_i\}_{i = 1}^{N}$~\cite{Lloyd82}.

\subsubsection{Product Quantization}
One issue of $K$-Means is the size of the codebook is quite limited due to 
the storage and computational cost.
To address the problem, PQ~\cite{JegouDS11}
splits each $\mathbf{x}_i$ into $M$
disjoint subvectors.
Assume the $m$-th subvector contains $S_m$ dimensions and then $\sum_{m = 1}^{M}{S_m} = P$.
Without loss of generality, $S_m$ is set to $S \triangleq P / M$ and $P$ is assumed to be divisible by $M$.
On the $m$-th subvector, $K$-means is performed to obtain $K$ \textit{sub codewords}. By this method, it generates $K^M$ clusters with only $O(KP)$ storage, while $K$-means requires $O(K^MP)$ storage
with the same number of clusters.
Meanwhile, the computing complexity is reduced from $O(K^MP)$ to $O(KP)$ to encode one data point.

Let $\mathbf{D}^m \in \mathbb{R}^{S \times K}$ be the matrix of the $m$-th sub codebook and each column is a $S$-dimensional sub codeword.
PQ can be taken as optimizing the following problem with respect to $\{\mathbf{D}^m\}_{m = 1}^{M}$ and $\{\mathbf{b}_i^m\}_{i = 1, m = 1}^{N, M}$.
\begin{align}
\begin{split}
\min ~~ &f_{\text{pq}, M, K} = \sum_{i = 1}^{N}
{
	\left\|
		\mathbf{x}_i -
			\begin{bmatrix}
			\mathbf{D}^1 \mathbf{b}_i^1\\
			\vdots \\
			\mathbf{D}^{M} \mathbf{b}_i^M
		\end{bmatrix}
	\right\|_2^2
} \\
\operatorname{s.t.}~~&\mathbf{b}_i^{m} \in \{0, 1\}^{K}  \\
& \|\mathbf{b}_i^{m}\|_1 = 1~~ i \in \{1, \cdots, N\},  m\in \{1, \cdots, M\}
\end{split}
\label{eqn:pq}
\end{align}
where
$\mathbf{b}_i^{m}$ is
also the $1$-of-$K$ encoding vector
on the $m$-th subvector and the index of $1$ indicates which sub codeword is used to encode $\mathbf{x}_i$.

\subsubsection{Cartesian $K$-Means}
CKM~\cite{NorouziF13}
optimally rotates the original space and formulates the problem as
\begin{align}
\begin{split}
\min ~~ & f_{\text{ck}, M, K} = \sum_{i = 1}^{N}
\left\|
	\mathbf{x}_i -
	\mathbf{R}
		\begin{bmatrix}
			\mathbf{D}^1 \mathbf{b}_i^1 \\
			\vdots \\
			\mathbf{D}^{M} \mathbf{b}_i^M
		\end{bmatrix}
\right\|_2^2 \\
\operatorname{s.t.} ~~ & \mathbf{R}^T \mathbf{R} = \mathbf{I}\\
& \mathbf{b}_i^{m} \in \{0, 1\}^{K}  \\
& \|\mathbf{b}_i^{m}\|_1 = 1~~ i \in \{1, \cdots, N\}, m\in \{1, \cdots, M\}
\end{split}
\label{eqn:ck_means}
\end{align}
The rotation matrix $\mathbf{R}$ is optimally learned by minimizing the distortion.


If $\mathbf{R}$ is constrained to
be the identity matrix $\mathbf{I}$,
it will be reduced to Eqn.~\ref{eqn:pq}.
Thus, we can assert that under the optimal solutions, we have $f^*_{\text{ck}, M, K} \le f^*_{\text{pq}, M, K}$,
where the asterisk superscript
indicates the objective function with the optimal parameters.

\section{Extended Cartesian $K$-Means}
\label{sec:sck_means}

In both PQ and CKM, only one sub codeword is used to encode the subvector.
To make the representation more flexible, we propose 
the extended Cartesian $K$-means (ECKM), where multiple sub codewords can be used in each subspace.

Mathematically,
we allow the $l_1$ norm of $\mathbf{b}_i^m$ to be a pre-set number $C$ ($C\ge 1$), instead of limiting it to be exactly 1. Meanwhile, any entry of $\mathbf{b}_i^m$ is relaxed as a non-negative integer instead of a binary value. The formulation is
\begin{align}
\begin{split}
\min ~~ & f_{\text{eck}, M, K, C} = \sum_{i = 1}^{N}
\left\|
	\mathbf{x}_i -
		\mathbf{R}
		\begin{bmatrix}
			\mathbf{D}^1 \mathbf{b}_i^1\\
			\vdots \\
			\mathbf{D}^{M} \mathbf{b}_i^M
		\end{bmatrix}
\right\|_2^2\\
\operatorname{s.t.} ~~ & \mathbf{R}^T \mathbf{R} = \mathbf{I}\\
& \mathbf{b}_i^m \in \mathbb{Z}_{+}^{K}\\
& \|\mathbf{b}_i^m\|_1 = C
\end{split}
\label{eqn:ick}
\end{align}
where $\mathbb{Z}_{+}$ denotes the set of non-negative integers.
The constraint is applied on all the points $i \in \{1, \cdots, N\}$
and on all the subspaces $m\in \{1, \cdots, M\}$. 
In the following, we omit the range of $i, m$
without confusion.

For the $m$-th sub codebook $\mathbf{D}^m \in \mathbb{R}^{S\times K}$, traditionally only one sub codeword can be selected
and there are only $K$ choices to encode the $m$-th subvector of
$\mathbf{R}^{T}\mathbf{x}_i$.
In the extended version,
any feasible $\mathbf{b}_i^m$
satisfying
$\mathbf{b}_i^m \in \mathbb{Z}_{+}^{K}$
and $\|\mathbf{b}_i^m\|_1 = C$
constructs a
quantizer, i.e. $\mathbf{D}^{m}\mathbf{b}_i^m$.
Thus, the total number of choices is $\binom{K + C - 1}{K - 1} \ge K$.
For example with $K = 256$ and $C = 2$, the difference is $\binom{K + C - 1}{K - 1} = 32896 \gg K = 256$.
With a more powerful representation, the distortion errors can be potentially reduced.

In theory, $\log_2\binom{K + C - 1}{K - 1}$ bits can be used to encode one $\mathbf{b}_i^m$, and the code length is $M\log_2(\binom{K + C - 1}{K - 1})$.
Practically, we use $\log_2(K)$ bits to encode one position of $1$. The $l_1$ norm of $\mathbf{b}_i^m$
is $C$, which can be interpreted that there are $C$ $1$s in $\mathbf{b}_i^m$.
Then $MC\log_2(K)$ bits are allocated to encode one data point.

\subsection{Learning}

Similar to~\cite{NorouziF13},
we present an iterative coordinate
descent algorithm to solve the problem in Eqn.~\ref{eqn:ick}.
There are three kinds of unknown variables, $\mathbf{R}$, $\mathbf{D}^m$, and $\mathbf{b}_i^m$.
In each iteration, two of them are fixed, and the other one is optimized.

\subsubsection{Solve $\mathbf{R}$ with $\mathbf{b}_i^m$ and $\mathbf{D}^m$ fixed}

With
\begin{align*}
\mathbf{X} &\triangleq
		\begin{bmatrix}
		\mathbf{x}_1 & \cdots & \mathbf{x}_N
		\end{bmatrix} \\
\mathbf{D} &\triangleq
		\begin{bmatrix}
		\mathbf{D}^{1} & & \\
						& \ddots & \\
						& 			& \mathbf{D}^{M}
		\end{bmatrix} \\
\mathbf{B} &\triangleq
	\begin{bmatrix}
	\mathbf{b}_1 & \cdots & \mathbf{b}_N
	\end{bmatrix} \\
\mathbf{b}_i &\triangleq
		\begin{bmatrix}
		{\mathbf{b}_i^1}^{T} & \cdots  &	{\mathbf{b}_i^M}^{T}
		\end{bmatrix}^{T},
\end{align*}
we re-write the objective function of Eqn.~\ref{eqn:ick} in a matrix form as
\begin{align*}
\|\mathbf{X} - \mathbf{R}\mathbf{D}\mathbf{B}\|_F^2,
\end{align*}
where $\|\cdot\|_F$ is the Frobenius norm.
The problem of solving $\mathbf{R}$ is the classic Orthogonal Procrustes problem~\cite{Peter66} and the solution can be obtained as follows: if SVD of $\mathbf{X}{(\mathbf{D}\mathbf{B})}^{T}$ is $\mathbf{X}{(\mathbf{D}\mathbf{B})}^{T} = \mathbf{U}\Sigma \mathbf{V}^T$, the optimal $\mathbf{R}$ will be $\mathbf{U}\mathbf{V}^{T}$.

\subsubsection{Solve $\mathbf{D}^m$ with $\mathbf{b}_i^m$ and $\mathbf{R}$ fixed}

Let $\mathbf{z}_i \triangleq \mathbf{R}^{T}\mathbf{x}_i$
and the $m$-th subvector of $\mathbf{z}_i$ be $\mathbf{z}_i^m$.
The objective function of Eqn.~\ref{eqn:ick} can also be written as,
\begin{align}
\sum_{i = 1}^{N} \sum_{m = 1}^{M}{\|\mathbf{z}_i^m - \mathbf{D}^m\mathbf{b}_i^m\|_2^2} = \sum_{m = 1}^{M}\|\mathbf{Z}^{m} - \mathbf{D}^{m}\mathbf{B}^{m}\|_F^2,
\label{eqn:represent_z_i_m}
\end{align}
where
\begin{align*}
\mathbf{Z}^m & \triangleq [\mathbf{z}_1^m, \cdots, \mathbf{z}_N^m] \\
\mathbf{B}^m & \triangleq [\mathbf{b}_1^m, \cdots, \mathbf{b}_N^m].
\end{align*}

Each $\mathbf{D}^{m}$ can be individually optimized  as
$(\mathbf{Z}^{m} {\mathbf{B}^{m}}^{T}) (\mathbf{B}^{m} {\mathbf{B}^{m}}^{T})^{+}$,
where $(\cdot)^{+}$ denotes the matrix (pseudo)inverse.

\subsubsection{Solve $\mathbf{b}_i^m$ with $\mathbf{D}^m$ and $\mathbf{R}$ fixed}

From Eqn.~\ref{eqn:ick} and Eqn.~\ref{eqn:represent_z_i_m}, $\mathbf{b}_i^m$ can be solved by optimizing
\begin{align*}
\min ~~&g_{\text{eck}} (\mathbf{b}_i^m) = \|\mathbf{z}_i^m - \mathbf{D}^m\mathbf{b}_i^m\|_2^2 \\
\operatorname{s.t.}~~& \mathbf{b}_i^m \in \mathbb{Z}_{+}^{K} \\
&\|\mathbf{b}_i^m\|_1 = C
\end{align*}

This is an integer quadratic programming and challenging to solve.
Here, we present a simple but practically efficient algorithm, based on matching pursuit~\cite{MallatZ93} and illustrated in Alg.~\ref{alg:ick_code}.
In each iteration, we hold a residual variable $\mathbf{r}$, initialized by $\mathbf{z}_i^m$ (Line~\ref{line:init_r} in Alg.~\ref{alg:ick_code}).
Let $\mathbf{d}_{k}^{m}$ be the $k$-th column of $\mathbf{D}^{m}$.
Each column is scanned to obtain the best one to minimize the distortion error (Line~\ref{line:best_k}), i.e.
\begin{align*}
k^* = \arg\min_{k}{\|\mathbf{r} - \mathbf{d}_{k}^{m}\|_2^2}.
\end{align*}
Then $\mathbf{r}$ is subtracted by $\mathbf{d}_{k^*}^m$ (Line~\ref{line:update_r}) for the next iteration, and the $k^*$-th dimension of $\mathbf{b}_i^m$ increases by $1$ (Line~\ref{line:update_b}) to indicate the $k^*$-th sub codeword is selected.
The process stops until $C$ iterations are reached.

\begin{algorithm}[t]
\caption{Code Generation for ECKM}
\label{alg:ick_code}
\begin{algorithmic}[1]
    \REQUIRE
        $\mathbf{z}_i^m$, $\mathbf{D}^m \in \mathbb{R}^{S\times K}$, $C$
    \ENSURE
        $\mathbf{b}_i^m$
    \STATE
    	$\mathbf{b}_i^m = \text{zeros}(K, 1)$
    \STATE
         $\mathbf{r} = \mathbf{z}_i^m$
         \label{line:init_r}
    \FOR{$c = 1 : C$}
    	\STATE
        	$k^* = \arg\min_{k}{\|\mathbf{r} - \mathbf{d}_{k}^m\|_2^2}$
        	\label{line:best_k}
        \STATE
        	$\mathbf{r} = \mathbf{r} - \mathbf{d}_{k^*}^{m}$
        	\label{line:update_r}
		\STATE
        	${b}_i^m(k^*) = {b}_i^m(k^*) + 1$
        	\label{line:update_b}
    \ENDFOR
\end{algorithmic}
\end{algorithm}

\section{Optimized Cartesian $K$-Means} \label{sec:ock_means}
Before introducing the proposed OCKM, we first present another equivalent formulation of the ECKM.
Since each entry of $\mathbf{b}_i^m$ in Eqn.~\ref{eqn:ick} is a non-negative integer, and the sum of all the entries is $C$,
we replace it by
\begin{align}
\mathbf{b}_i^m = \sum_{c = 1}^{C}{\mathbf{b}_i^{m, c}}
\label{eqn:construct_b_im}
\end{align}
with
\begin{align}
\begin{split}
& \mathbf{b}_{i}^{m, c}  \in \{0, 1\}^{K} \\
& \|\mathbf{b}_i^{m, c}\|_1  = 1.
\end{split}
\label{eqn:seperate_new_constraint}
\end{align}

Given any feasible $\mathbf{b}_i^{m}$,
we can always find at least one group of $\{\mathbf{b}_i^{m, c}\}_{c = 1}^{C}$ satisfying  Eqn.~\ref{eqn:seperate_new_constraint} and Eqn.~\ref{eqn:construct_b_im}.
Any group of $\{\mathbf{b}_i^{m, c}\}_{c = 1}^{C}$ satisfying Eqn.~\ref{eqn:seperate_new_constraint} can also construct a valid $\mathbf{b}_i^m$ by Eqn.~\ref{eqn:construct_b_im} for Eqn.~\ref{eqn:ick}.
For example, if $\mathbf{b}_i^m = \begin{bmatrix}
2 & 0 & 1 & 0
\end{bmatrix}$, we can replace it by the summation of $\begin{bmatrix}
1 & 0 & 0 & 0
\end{bmatrix}$,
$\begin{bmatrix}
1 & 0 & 0 & 0
\end{bmatrix}
$
and
$
\begin{bmatrix}
0 & 0 & 1 & 0
\end{bmatrix}
$.

Substituting Eqn.~\ref{eqn:construct_b_im} into the objective function of Eqn.~\ref{eqn:ick}, we have
\begin{align*}
f_{\text{eck}, M, K, C}
 = \sum_{i = 1}^{N}{\left\|\mathbf{x}_i - \mathbf{R}
\begin{bmatrix}
\sum_{c}{\mathbf{D}^{1}\mathbf{b}_i^{1, c}} \\
\vdots\\
\sum_{c}{\mathbf{D}^{M}\mathbf{b}_i^{M, c}}
\end{bmatrix}
\right\|_2^2}.
\end{align*}

On the $m$-th subvector, $\mathbf{b}_i^{m, c}$ represents the selected sub codeword.
There are in total of $C$ selections from a single sub codebook.
To further reduce the distortion errors, we propose to expand one sub codebook to
$C$ different sub codebooks $\mathbf{D}^{m, c}\in \mathbb{R}^{S \times K}, c\in \{1, \cdots, C\}$, each of which is used for sub codeword selection.
In summary, the formulation is as follows.
\begin{align}
\begin{split}
\min  & f_{\text{ock}, M, K, C}
 = \sum_{i = 1}^{N}{\left\|\mathbf{x}_i - \mathbf{R}
\begin{bmatrix}
\sum_{c}{\mathbf{D}^{1, c}\mathbf{b}_i^{1, c}} \\
\vdots\\
\sum_{c}{\mathbf{D}^{M, c}\mathbf{b}_i^{M, c}}
\end{bmatrix}
\right\|_2^2} \\
\operatorname{s.t.}
& ~\mathbf{R}^T \mathbf{R} = \mathbf{I}\\
& ~\mathbf{b}_i^{m, c} \in \{0, 1\}^{K} \\
& ~\|\mathbf{b}_i^{m, c}\|_1 = 1
\end{split}
\label{eqn:ock_means}
\end{align}
which we call Optimized Cartesian $K$-Means (OCKM).

Since any $\mathbf{b}_i^{m, c}$ requires $\log_2(K)$
bits to encode, the code length of representing each point is $MC\log_2(K)$.

\subsection{Learning}

Similar with ECKM, an iterative coordinate descent algorithm is employed to optimize $\mathbf{R}$, $\mathbf{D}^{m, c}$ and $\mathbf{b}_i^{m, c}$.

\subsubsection{Solve $\mathbf{R}$ with $\mathbf{D}^{m, c}$ and $\mathbf{b}_i^{m, c}$ fixed}

The objective function is re-written in a matrix form as
\begin{align*}
\|\mathbf{X} - \mathbf{R}\hat{\mathbf{D}}\hat{\mathbf{B}}\|_F^2,
\end{align*}
where
\begin{align}
\hat{\mathbf{D}} & \triangleq
\begin{bmatrix}
\hat{\mathbf{D}}^m & & \\
					&	\ddots & \\
					&			&	\hat{\mathbf{D}}^m
\end{bmatrix} 
 \\
\hat{\mathbf{D}}^m &\triangleq
\begin{bmatrix}
\mathbf{D}^{m, 1} & \cdots & \mathbf{D}^{m, C}
\end{bmatrix}  \label{eqn:hat_d_m}
\\
\hat{\mathbf{B}} & \triangleq \begin{bmatrix}
{{}\hat{\mathbf{B}}^{1}}^{T} &
\cdots &
{{}\hat{\mathbf{B}}^{M}}^{T}
\end{bmatrix}^{T} 
\\
\hat{\mathbf{B}}^{m} & \triangleq \begin{bmatrix}
\hat{\mathbf{b}}_1^m & \cdots & \hat{\mathbf{b}}_N^m
\end{bmatrix} \\
\hat{\mathbf{b}}_i^m & \triangleq 
\begin{bmatrix}
{\mathbf{b}_i^{m, 1}}^{T} &
\cdots &
{\mathbf{b}_i^{m, C}}^{T}
\end{bmatrix}^{T}.  \label{eqn:hat_b_i_m}
\end{align}
Then optimizing $\mathbf{R}$ is the Orthogonal Procrustes Problem~\cite{Peter66}.

\subsubsection{Solve $\mathbf{D}^{m, c}$ with $\mathbf{R}$ and $\mathbf{b}_i^{m, c}$ fixed}

Similar with Eqn.~\ref{eqn:represent_z_i_m} in ECKM, the objective function of OCKM can be written as
\begin{align*}
\sum_{m = 1}^{M}{\|\mathbf{Z}^m - \hat{\mathbf{D}}^{m}\hat{\mathbf{B}}^m\|_F^2}.
\end{align*}
Each $\hat{\mathbf{D}}^{m}$ can also be individually solved  by the matrix (pseudo)inversion.

\subsubsection{Solve $\mathbf{b}_i^{m, c}$ with $\mathbf{R}$ and $\mathbf{D}^{m, c}$ fixed}\label{subsubsec:solve_b}

The sub problem is
\begin{align*}
\min ~~&g_{\text{ock}}(\hat{\mathbf{b}}_i^{m, c}) = \|\mathbf{z}_i^m - \sum_{c = 1}^{C}{\mathbf{D}^{m, c}\mathbf{b}_i^{m, c}}\|_2^2 \\
\operatorname{s.t.}~~& \mathbf{b}_i^{m, c} \in \{0, 1\}^{K} \\
&\|\mathbf{b}_i^{m, c}\|_1 = 1
\end{align*}

One straightforward method to solve the sub problem is to greedily find the best sub codeword in $\mathbf{D}^{m, c}$ one by one similar with Alg.~\ref{alg:ick_code} for ECKM.
One drawback is the succeeding sub codewords can only be combined with the previous one sub codeword.

\begin{figure}[ttt!]
 \begin{minipage}[t]{3.15in}
 \begin{algorithm}[H]
 \caption{Code generation for OCKM}
 \label{alg:opt_ck_b}
 \begin{algorithmic}[1]
     \REQUIRE
         $\mathbf{z}_i^m$, $\hat{\mathbf{D}}^{m} \in \mathbb{R}^{S\times KC}$
     \ENSURE
         $\hat{\mathbf{b}}_i^{m}$
     \STATE
     	[$\hat{\mathbf{b}}_i^{m}$, $\text{error}$] = GenCodeOck($\mathbf{z}_i^m$, $\hat{\mathbf{D}}^{m}$, 1)
 \end{algorithmic}
 \end{algorithm}
 \end{minipage}
 \begin{minipage}[t]{3.15in}
\begin{algorithm}[H]
\caption{[$\hat{\mathbf{b}}$, $\text{error}$] = GenCodeOck($\mathbf{z}_i^m$, $\hat{\mathbf{D}}^m$, $\text{idx}$)}
\label{alg:opt_ck_b_recur}
\begin{algorithmic}[1]
    \IF {$\text{idx} == \text{C}$}
	    \STATE
	    	$k^* = \arg\min_{k}{\|\mathbf{z}  - \mathbf{d}_{k}^{m, \text{idx}}\|_2^2}$
	    	\label{line:last_best}
	    \STATE
	    	$\mathbf{b} = \text{zeros}(K, 1)$
	    \STATE
	    	${b}(k^*) = 1$
	    \STATE
	    	$\text{error} = \|\mathbf{z} - \mathbf{d}^{m, \text{idx}}_{k^*}\|_2^2$
	\ELSE
	    \STATE
	    	$[k^*_1, \cdots, k^*_{T}] = \arg\min_{k}{\|\mathbf{z}_i^m - \mathbf{d}_{k}^{m, \text{idx}}\|_2^2}$ \label{line:best_T_columns}
	    \STATE
	    	$\text{best}.\text{error} = \text{LARGE}$ \label{line:init_best_error}
	    \FOR {$i = 1 : T$}
	    \STATE
	    	$k \leftarrow k_i^*$
	    \STATE
	    	$\mathbf{z}' = \mathbf{z}_i^m - \mathbf{d}_{k}^{m, \text{idx}}$ \label{line:substract_target}
	    \STATE
	    	[$\hat{\mathbf{b}}'$, $\text{error}'$] = GenCodeOck($\mathbf{z}'$, $\hat{\mathbf{D}}^{m}$, $\text{idx} + 1$) \label{line:recursive_calling}
	    \IF {$\text{error}' < \text{best}.\text{error}$}
	    \STATE
	    	$\text{best}.\text{error}= \text{error}'$
	    \STATE
	    	$\text{best}.\text{idx}= k$
	    \STATE
	    	$\text{best}.\hat{\mathbf{b}} = \hat{\mathbf{b}}'$
	    \ENDIF
	    \ENDFOR
	    \STATE
	    	$\mathbf{b}^1 = \text{zero}(K, 1)$ \label{line:init_b1}
	    \STATE
	    	${b}^1(\text{best}.\text{idx}) = 1$ \label{line:set_b1}
	    \STATE
	    $\hat{\mathbf{b}} = [\mathbf{b}^{1}; \text{best}.\hat{\mathbf{b}}]$ \label{line:construct_hat_b}
    \ENDIF
\end{algorithmic}
\end{algorithm}
 \end{minipage}
\end{figure}

To increase the accuracy with a reasonable time cost, we improve it as multiple best candidates matching pursuit. The algorithm is illustrated in Alg.~\ref{alg:opt_ck_b} and Alg.~\ref{alg:opt_ck_b_recur}.
The input is the target vector $\mathbf{z}_i^m$, and the sub codebooks $\hat{\mathbf{D}}^{m}$ (defined in Eqn.~\ref{eqn:hat_d_m}).
The output is the binary code represented as $\hat{\mathbf{b}}_i^m$ (defined in Eqn.~\ref{eqn:hat_b_i_m}).

The function $[\hat{\mathbf{b}}, \text{error}] = \text{GenCodeOck}(\mathbf{z}_i^m, \hat{\mathbf{D}}^m, \text{idx})$
in Alg.~\ref{alg:opt_ck_b_recur}
encodes $\mathbf{z}_i^m$
with the last $(C - \text{idx} + 1)$ sub codebooks $\{\mathbf{D}^{m, c}, c\in \{\text{idx}, \cdots, C\}\}$.
The encoding vector $\hat{\mathbf{b}}$ with $(C - \text{idx} + 1)K$ dimensions and  the distortion $\text{error}$ are returned.

At first, $\text{idx} = 1$ and we search the top-$T$ best columns in $\mathbf{D}^{m, \text{idx}}$ (Line~\ref{line:best_T_columns} in Alg.~\ref{alg:opt_ck_b_recur}) with $T$ being a pre-defined parameter. 
Let $\mathbf{d}_{k}^{m, \text{idx}}$ be the $k$-th column of $\mathbf{D}^{m, \text{idx}}$.
The final selected one is taken among the $T$ best candidates.
For each candidate, the target vector is substracted by the corresponding sub codeword (Line~\ref{line:substract_target}), and then the rest codes $\hat{\mathbf{b}}'$ are generated by recursively calling the function $\text{GenCodeOck}$ with the parameter $\text{idx} + 1$
(Line~\ref{line:recursive_calling}).

Among the $T$ candidates, the one with the smallest distortion error stored in $\text{best}.\text{idx}$ is selected to construct the final binary representation (Line~\ref{line:init_b1},~\ref{line:set_b1},~\ref{line:construct_hat_b}). In Line~\ref{line:init_best_error}, the error is initialized as a large enough constant $\text{LARGE}$.

\noindent\textbf{Analysis.}
The parameter $T$ controls the time cost and the accuracy towards the optimality. If the time complexity is $J(C)$, we can derive the recursive relation
\begin{align*}
J(C) = SK + TJ(C - 1).
\end{align*}
As shown in Line~\ref{line:best_T_columns} of Alg.~\ref{alg:opt_ck_b_recur},
$T$ sub codewords are selected and here we simply compare with each sub codeword, resulting in $O(SK)$ complexity. Since $T$ is generally far smaller than $K$, the cost of partially sorting to obtain the $T$ best ones can be ignored. For each of the $T$ best sub codeword, the complexity of finding the binary code in the rest sub codebooks is $J(C - 1)$ (Line~\ref{line:recursive_calling}). With $J(1) = SK$, we can derive the complexity is
\begin{align}
J(C) = SK\frac{T^C - 1}{T - 1}.
\end{align}
Since there are $M$ subvectors, the complexity of encoding one full vector is $J(C)M = PK(T^{C} - 1) / (T - 1) = O(PKT^{C - 1})$. The time cost increases with a larger $T$.

Generally, Alg.~\ref{alg:opt_ck_b} can achieve a better solution with a larger $T$.
If the position of $1$ in $\mathbf{b}_i^{m, c}$ is uniformly distributed and independent with the others, we can calculate the probability of obtaining the optimal solution by Alg.~\ref{alg:opt_ck_b}.
On each subvector, there are $K^{C}$ different cases for $\hat{\mathbf{b}}_i^{m}$.
In Alg.~\ref{alg:opt_ck_b},
Line~\ref{line:best_T_columns} is executed $C - 1$ times, and thus
$T$ sub codewords are selected for each of the first $C - 1$ sub codebooks.
All the sub codewords in the last sub codebook can be taken to be  tried to find the one with the minimal distortion (Line~\ref{line:last_best}).
Then, $T^{C - 1}K$ different cases are checked, and the probability to find the optimal solution is
\begin{align}
\frac{T^{C - 1} K}{K^{C}} = \left(\frac{T}{K}\right) ^ {C - 1}.
\end{align}

If $T = K$, the probability will be $1$. It is certain that the optimal solution can be found, but with a high time cost. The probability increases with a larger $T$. Meanwhile, it decreases exponentially with $C$. Generally, we set $C = 2$ to have a better sub optimal solution.
Fig.~\ref{fig:sift1m3_diff} illustrates the relationship between the optimized distortion errors and $T$ on the SIFT1M training set, which is described in Sec.~\ref{sec:exp}.
In practice, we choose $T = 10$ as a tradeoff.

\begin{figure}
\centering
\includegraphics[width = 0.85\linewidth]{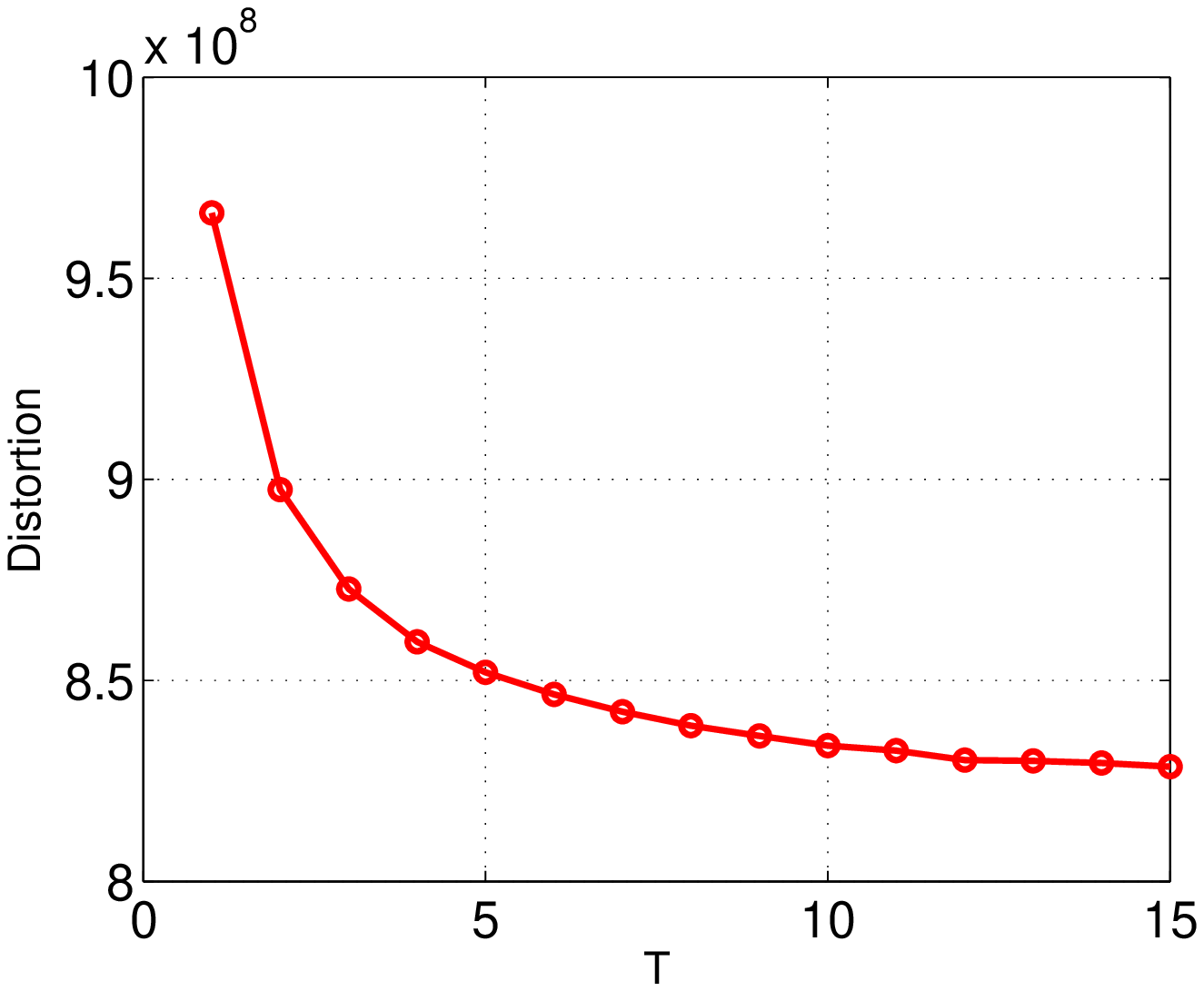}
\caption{Distortion errors on the training set of SIFT1M vs different $T$(s) with $M = 8$, $K = 256$ and $C = 2$.}
\label{fig:sift1m3_diff}
\end{figure}

\section{Discussions}\label{sec:discussion}

\subsection{Connections}
\label{sec:connection}

Our approaches are closely related with PQ~\cite{JegouDS11}
and CKM~\cite{NorouziF13}.
PQ splits the original vector into multiple subvectors to
address the scalability issues.
CKM rotates the space optimally and thus
can achieve better accuracy.
In each subspace, both PQ and CKM generate a single sub codebook and
choose one
sub codeword to quantize the original point.
Our ECKM extends the idea by choosing multiple sub codewords from the
single sub codebook, while our OCKM generates multiple sub codebooks,
each of which contributes one sub codeword.

Next, we theoretically discuss the relations between our OCKM
and others.

\begin{theorem}
\label{thm:same_number_subvector}
Under optimal solutions, we have:
\begin{align}
f_{\text{ock}, M, K, C}^*
& \le f_{\text{ck}, M, K}^*
\label{eqn:less_equal_ck_ock}
\\
f_{\text{ock}, M, K, C}^* & \le
f_{\text{eck}, M, K, C}^{*}.
\label{eqn:less_equal_ours}
\end{align}
\end{theorem}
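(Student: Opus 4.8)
The plan is to prove both inequalities by the same standard device: exhibit OCKM as a relaxation (a strictly larger feasible set) of each of CKM and ECKM, so that every optimal solution of the constrained problem maps to a feasible OCKM point attaining an equal objective value; the OCKM optimum is then no larger. No analysis is required beyond checking feasibility and recomputing the reconstruction term.

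First I would establish Eqn.~\ref{eqn:less_equal_ours}, the comparison with ECKM, using the equivalent formulation of ECKM already derived in this section (the one that writes the $m$-th block as $\sum_c \mathbf{D}^m \mathbf{b}_i^{m,c}$ with each $\mathbf{b}_i^{m,c}$ a $1$-of-$K$ vector). OCKM differs from that formulation only in that it permits the $C$ sub codebooks within a subspace to differ, whereas ECKM forces them to coincide. Concretely, I would take an optimal ECKM solution $(\mathbf{R}, \{\mathbf{D}^m\}, \{\mathbf{b}_i^{m,c}\})$ and build an OCKM solution by setting $\mathbf{D}^{m,c} = \mathbf{D}^m$ for every $c \in \{1,\dots,C\}$ while keeping $\mathbf{R}$ and all $\mathbf{b}_i^{m,c}$ unchanged. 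This satisfies every OCKM constraint ($\mathbf{R}^T\mathbf{R}=\mathbf{I}$, and each $\mathbf{b}_i^{m,c}$ binary with unit $\ell_1$ norm), and each reconstruction block $\sum_c \mathbf{D}^{m,c}\mathbf{b}_i^{m,c}$ reduces to $\sum_c \mathbf{D}^m \mathbf{b}_i^{m,c}$, so the two objectives are identical. Hence $f_{\text{ock}, M, K, C}^* \le f_{\text{eck}, M, K, C}^*$.

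Next I would establish Eqn.~\ref{eqn:less_equal_ck_ock}, the comparison with CKM, by embedding a single-codeword selection into the $C$-codebook structure via zero padding. Given an optimal CKM solution $(\mathbf{R}, \{\mathbf{D}^m\}, \{\mathbf{b}_i^m\})$ with each $\mathbf{b}_i^m$ a $1$-of-$K$ vector, I would construct an OCKM solution that keeps the same $\mathbf{R}$, sets $\mathbf{D}^{m,1} = \mathbf{D}^m$ and $\mathbf{b}_i^{m,1} = \mathbf{b}_i^m$, and for the remaining indices $c \ge 2$ sets $\mathbf{D}^{m,c} = \mathbf{0}$ with $\mathbf{b}_i^{m,c}$ chosen to be any fixed $1$-of-$K$ vector (the particular choice is irrelevant since it is annihilated). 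Then $\sum_c \mathbf{D}^{m,c}\mathbf{b}_i^{m,c} = \mathbf{D}^m \mathbf{b}_i^m + \mathbf{0} + \cdots + \mathbf{0} = \mathbf{D}^m \mathbf{b}_i^m$, so the OCKM reconstruction of each point coincides with the CKM one and the objectives are equal. Since OCKM places no constraint on the entries of $\mathbf{D}^{m,c}$, the zero matrices are admissible, this point is feasible, and $f_{\text{ock}, M, K, C}^* \le f_{\text{ck}, M, K}^*$.

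I expect the only subtlety -- rather than a genuine obstacle -- to be the bookkeeping of feasibility: confirming that the OCKM constraint set imposes nothing on the codebooks $\mathbf{D}^{m,c}$ (so that both the copying and the zero-padding constructions are legal) and that each constructed $\mathbf{b}_i^{m,c}$ is genuinely binary with unit $\ell_1$ norm. Because both arguments only require producing one feasible OCKM point per constrained optimum, attainment of the minima is not needed; the inequalities follow immediately from the definition of the optimal value as a minimum over the feasible set.
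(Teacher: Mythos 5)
Your proposal is correct and takes essentially the same route as the paper: for Eqn.~\ref{eqn:less_equal_ours} it identifies ECKM (in its equivalent $\sum_c \mathbf{D}^m\mathbf{b}_i^{m,c}$ form) as the constrained case $\mathbf{D}^{m,c_1}=\mathbf{D}^{m,c_2}$ of OCKM, and for Eqn.~\ref{eqn:less_equal_ck_ock} it maps the optimal CKM solution to a feasible OCKM point by keeping $\mathbf{R}$, setting $\mathbf{D}^{m,1}=\mathbf{D}^m_{\text{ck}}$, and zero-padding the codebooks for $c\ge 2$, preserving the objective value. The only cosmetic difference is that the paper fixes $\mathbf{b}_{i,\text{ock}}^{m,c}=\mathbf{b}_{i,\text{ck}}^{m}$ for all $c$, while you observe that any $1$-of-$K$ choice works for $c\ge 2$ since it is annihilated by the zero codebook.
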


\begin{proof}
If we limit $\mathbf{D}^{m, c_1} = \mathbf{D}^{m, c_2}, c_1, c_2 \in \{1, \cdots, C\}$ in
Eqn.~\ref{eqn:ock_means},
OCKM is reduced to the ECKM in Eqn.~\ref{eqn:ick} by relations in Eqn.~\ref{eqn:construct_b_im} and Eqn.~\ref{eqn:seperate_new_constraint}, which proves the Eqn.~\ref{eqn:less_equal_ours}.

Denote $\mathbf{R}_{\text{ck}}, \{\mathbf{D}^{m}_{\text{ck}}\}_{m = 1}^{M}, \{\mathbf{b}_{i, \text{ck}}^m\}_{i = 1, m = 1}^{N, M}$ as the optimal solution of CKM in Eqn.~\ref{eqn:ck_means}. A feasible solution of OCKM can be constructed by
\begin{align*}
\mathbf{R}_{\text{ock}} & = \mathbf{R}_{\text{ck}} \\
\mathbf{D}^{m, c}_{\text{ock}} &= 
\begin{cases}
\mathbf{D}_{\text{ck}}^{m} & c = 1\\
\mathbf{0} & c \ge 2
\end{cases}\\
\mathbf{b}_{i, \text{ock}}^{m, c} & = \mathbf{b}_{i, \text{ck}}^{m}~~  c \in \{1, \cdots, C\}.
\end{align*}
With the constructed parameters, the  objective function of OCKM remains the same with CKM, which proves the Eqn.~\ref{eqn:less_equal_ck_ock}.
\end{proof}

This theorem implies the proposed OCKM
can potentially achieve a lower distortion error
with the number of partitions $M$ and $K$ fixed.


\begin{theorem}
Under the optimal  solutions, we have,
\begin{align}
f^*_{\text{ock}, M', K, C} \le f_{\text{ck}, M, K}^{*}
\label{eqn:un_equal_subvector}
\end{align}
if $M' = M/C$ and $M$ is divisible by $C$.
\label{thm:code_length}
\end{theorem}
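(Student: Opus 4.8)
The plan is to mirror the constructive argument used for Theorem~\ref{thm:same_number_subvector}: I would exhibit a \emph{feasible} OCKM configuration with parameters $(M', K, C)$ whose objective value equals the optimal CKM value $f^*_{\text{ck}, M, K}$, and then invoke optimality of $f^*_{\text{ock}, M', K, C}$. The guiding observation that makes the comparison fair is that both schemes spend the same number of bits: OCKM with $M' = M/C$ subvectors costs $M'C\log_2(K) = M\log_2(K)$ bits, exactly the CKM budget. Thus, unlike Theorem~\ref{thm:same_number_subvector}, which compares the two schemes at equal $M$ (and hence unequal code length), this statement equalizes the code length and asks whether the extra flexibility of multiple sub codebooks still cannot hurt.

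The construction groups the $M$ CKM subvectors into $M'$ consecutive blocks of $C$ subvectors each, which is possible since $M$ is divisible by $C$. Let $\mathbf{R}_{\text{ck}}$, $\{\mathbf{D}^m_{\text{ck}}\}$, $\{\mathbf{b}^m_{i,\text{ck}}\}$ denote the optimal CKM solution. The $m'$-th OCKM subvector has dimension $S' = P/M' = CS$ and corresponds to the CKM subvectors indexed $m = (m'-1)C + c$ for $c \in \{1, \dots, C\}$. I would embed the $c$-th CKM sub codebook into the $c$-th $S$-dimensional slot of the $c$-th OCKM sub codebook, with zeros elsewhere, i.e.\ (schematically)
\[
\mathbf{D}^{m', c}_{\text{ock}} = \begin{bmatrix} \mathbf{0} \\ \mathbf{D}_{\text{ck}}^{(m'-1)C+c} \\ \mathbf{0} \end{bmatrix} \in \mathbb{R}^{CS \times K},
\]
where the nonzero block occupies the $c$-th of the $C$ row-groups. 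The codes are copied verbatim, $\mathbf{b}^{m', c}_{i, \text{ock}} = \mathbf{b}^{(m'-1)C+c}_{i, \text{ck}}$, and the rotation is reused, $\mathbf{R}_{\text{ock}} = \mathbf{R}_{\text{ck}}$. Note that zero sub codebooks are admissible in Eqn.~\ref{eqn:ock_means}, just as in the proof of Theorem~\ref{thm:same_number_subvector}.

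With this choice the OCKM constraints hold immediately: each copied $\mathbf{b}^{m',c}_{i,\text{ock}}$ is still a $1$-of-$K$ indicator, and $\mathbf{R}_{\text{ock}}$ is still orthogonal. The identity to verify is that, within each block, $\sum_{c=1}^{C}\mathbf{D}^{m',c}_{\text{ock}}\mathbf{b}^{m',c}_{i,\text{ock}}$ equals the vertical concatenation of the $C$ CKM reconstructions $\mathbf{D}^{(m'-1)C+c}_{\text{ck}}\mathbf{b}^{(m'-1)C+c}_{i,\text{ck}}$ for $c = 1, \dots, C$; this holds because the disjoint supports of the $\mathbf{D}^{m',c}_{\text{ock}}$ force each summand into its own $S$-dimensional slot. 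Stacking over $m'$ reproduces exactly the same $P$-dimensional vector that CKM feeds into the rotation, so after applying the shared $\mathbf{R}$ the per-point residuals — and hence the objectives — coincide. Since the constructed configuration is feasible (though not necessarily optimal) for OCKM, optimality yields $f^*_{\text{ock}, M', K, C} \le f^*_{\text{ck}, M, K}$. I expect the only delicate part to be the index bookkeeping between the CKM subvector label $m$ and the OCKM pair $(m', c)$, together with confirming the dimension match $S' = CS$; once the disjoint-support block layout is fixed, the objective equality is a direct consequence.
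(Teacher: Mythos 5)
Your proposal is correct and takes essentially the same route as the paper's own proof: the paper constructs exactly the same feasible OCKM solution, setting $\mathbf{R}_{\text{ock}} = \mathbf{R}_{\text{ck}}$, embedding $\mathbf{D}_{\text{ck}}^{(p-1)C+q}$ into the $q$-th $S$-dimensional row block of $\mathbf{D}_{\text{ock}}^{p,q}$ with zero blocks $\mathbf{0}_{(q-1)S\times K}$ above and $\mathbf{0}_{(C-q)S\times K}$ below, copying the codes $\mathbf{b}_{i,\text{ock}}^{p,q} = \mathbf{b}_{i,\text{ck}}^{(p-1)C+q}$, and concluding by feasibility. If anything, your write-up is slightly more explicit than the paper's, since you spell out the disjoint-support argument for why the per-point reconstructions (and hence objectives) coincide, which the paper leaves implicit apart from a worked $C=2$, $M=2$ example.
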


\begin{proof}
The basic idea is for the optimal solution of CKM, every consecutive $C$ sub codebooks and the binary representation are grouped to construct a feasible solution of OCKM with an equal objective function.

Specifically, the construction is
\begin{align*}
\mathbf{R}_{\text{ock}} &= \mathbf{R}_{\text{ck}} \\
\mathbf{D}_{\text{ock}}^{p, q} &= \begin{bmatrix}
\mathbf{0}_{(q - 1)S \times K} \\
\mathbf{D}_{\text{ck}}^{(p - 1)C + q}\\
\mathbf{0}_{(C - q)S \times K}
\end{bmatrix} \\
\mathbf{b}_{i, \text{ock}}^{p, q} & = \mathbf{b}_{i, \text{ck}}^{(p - 1)C + q}, 
\end{align*}
where $\mathbf{0}_{a \times b}$ is a matrix of size $a\times b$ with all entries being $0$, 
and $p\in \{1, \cdots, M'\}, q\in \{1, \cdots, C\}$.
\end{proof}

Take $C = 2$, $M = 2$ as an example.
The formulation of CKM is
\begin{align*}
\min ~~&f_{\text{ck}, 2, K} = \sum_{i = 1}^{N}\left\|\mathbf{x}_i - \mathbf{R}
\begin{bmatrix}
	\mathbf{D}^{1} 	& 	\mathbf{0}					\\
				\mathbf{0}	& \mathbf{D}^{2}
\end{bmatrix}
\begin{bmatrix}
\mathbf{b}_i^{1} \\
\mathbf{b}_i^{2}
\end{bmatrix}
\right\|_2^2 \\
\operatorname{s.t.}~~& \mathbf{R}^{T}\mathbf{R} = \mathbf{I} \\
& \mathbf{b}_i^{m} \in \{0, 1\}^{K}\\
& \|\mathbf{b}_i^{m}\|_1 = 1
\end{align*}
Let $\mathbf{R}_{\text{ck}}$,
$\{\mathbf{D}^{m}_{\text{ck}}\}_{m = 1}^{2}$,
$\{\mathbf{b}_{i, \text{ck}}^{m}\}_{i = 1, m = 1}^{N, 2}$,
be the optimal solutions of CKM.
Then
\begin{align*}
\mathbf{R}_{\text{ock}} &= \mathbf{\mathbf{R}}_{\text{ck}} \\
\mathbf{D}_{\text{ock}}^{1, 1} &=
\begin{bmatrix}
\mathbf{D}_{\text{ck}}^{1} \\
\mathbf{0}
\end{bmatrix}\\
\mathbf{D}_{\text{ock}}^{1, 2} & =
\begin{bmatrix}
\mathbf{0} \\
\mathbf{D}_{\text{ck}}^{2}
\end{bmatrix} \\
\mathbf{b}_{\text{ock}}^{1, c} &= \mathbf{b}_{\text{ck}}^{c}~~c \in \{1, 2\}
\end{align*}
will be feasible for the problem of OCKM, i.e.
\begin{align*}
\min ~~&f_{\text{ock}, 1, K, 2} = \sum_{i = 1}^{N}{\left\|\mathbf{x}_i - \mathbf{R}
\begin{bmatrix}
\mathbf{D}^{1, 1} & \mathbf{D}^{1, 2}
\end{bmatrix}
\begin{bmatrix}
\mathbf{b}_i^{1, 1}\\
\mathbf{b}_i^{1, 2}
\end{bmatrix}
\right\|_2^2} \\
\operatorname{s.t.} ~~&\mathbf{R}^{T}\mathbf{R} = \mathbf{I} \\
&\mathbf{b}_i^{1, c} \in \{0, 1\}^{K}\\
& \|\mathbf{b}_i^{1, c}\|_1 = 1
\end{align*}
and they have identical objective function values.

In Theorem \ref{thm:code_length}, the code length of both approaches is $M / C \times C \times \log_2(K) = M\log_2(K)$, which
ensures the distortion error of OCKM is not larger than that of CKM with the same code length.

Theorem~\ref{thm:same_number_subvector} and
Theorem~\ref{thm:code_length}
guarantee the advantages of our OCKM with
multiple sub codebooks over the approach
with single sub codebook.

\subsection{Inequality Constraints or Equality Constraints}
One may expect to replace the equality constraint $\|\mathbf{b}_i^{m, c}\|_1 = 1$
in Eqn.~\ref{eqn:ock_means} as the inequality, i.e.
\begin{align}
\|\mathbf{b}_i^{m, c}\|_1 \le 1.
\end{align}
This can potentially give a lower distortion under the same $M$ and $K$.
However, under the same code length, this inequality constraint cannot be better than the equality constraints.

For the inequality case,
there are $K + 1$ different values for $\mathbf{b}_{i, \text{inequality}}^{m}$,
i.e.
$\|\mathbf{b}_{i, \text{inequality}}^{m, c}\|_1 = 0$, or $1$.
The subscripts $\text{equality}$ and $\text{inequality}$ are used for the problem with the equality constraint and that with the inequality constraint, respectively.
Then, the code length is $MC\log_2(K + 1)$.

With the same code length, the equality case can consume $K + 1$ sub codewords on each subvector. The size of $\mathbf{D}_{\text{equality}}^{m, c}$ is $S\times (K + 1)$, and the size of $\mathbf{b}_{i, \text{equality}}^{m, c}$ is $(K + 1) \times 1$.

From any feasible solution of the inequality case, we can derive the feasible solution of the equality case with the same objective function value, i.e.
\begin{align*}
\mathbf{R}_{\text{equality}} & = \mathbf{R}_{\text{inequality}} \\
\mathbf{D}_{\text{equality}}^{m, c} & =
\begin{bmatrix}
\mathbf{D}_{\text{inequality}}^{m, c}, \mathbf{0}_{S\times 1}
\end{bmatrix} \\
\mathbf{b}_{\text{equality}}^{m, c} & = \begin{cases}
\begin{bmatrix}
\mathbf{b}_{\text{inequality}}^{m, c}\\
0
\end{bmatrix} &\text{if } \|\mathbf{b}_{\text{inequality}}^{m, c}\|_1 = 1 \\
\begin{bmatrix}
\mathbf{0}_{K\times 1} \\
1
\end{bmatrix} &\text{if }
\|\mathbf{b}_{\text{inequality}}^{m, c}\|_1 = 0.
\end{cases}
\end{align*}

In the equality case,
the last sub codeword
is enforced to
be $\mathbf{0}_{S\times 1}$,
and
the other sub codewords
are filled by the one in the inequality case.
If $\mathbf{b}^{m, c}_{\text{inequality}}$ is all $0$s,
the entry of $\mathbf{b}_{\text{equality}}^{m, c}$
corresponding to the last sub codeword is set as $1$,
or follows $\mathbf{b}^{m, c}_{\text{inequality}}$.
This can ensure the multiplication
$\mathbf{D}_{\text{equality}}^{m, c}\mathbf{b}_{\text{equality}}^{m, c}$ equals
$\mathbf{D}_{\text{inequality}}^{m, c}\mathbf{b}_{\text{inequality}}^{m, c}$.

The objective function value remains the same, while with the optimal solution the equality case may obtain a lower distortion.

\begin{algorithm}[t]
\caption{Optimization of OCKM}
\label{alg:opt}
\begin{algorithmic}[1]
    \REQUIRE
        $\{\mathbf{x}_i\}_{i = 1}^{N}$, $M$
    \ENSURE
        $\mathbf{R}$, $\{\mathbf{D}^{m, c}\}_{m = 1, c = 1}^{M, C}$, and $\{\mathbf{b}_i^{m, c}\}_{i = 1,m = 1,c = 1}^{N, M, C}$
    \STATE
    	$\mathbf{R} = \mathbf{I}$
    \STATE
        Randomly initialize $\{\mathbf{D}^{m, c}\}_{m = 1, c = 1}^{M, C}$ from the data set.
    \STATE
         Update $\{\mathbf{b}_i^{m, c}\}_{i = 1, m = 1, c = 1}^{N, M, C}$ by Alg.~\ref{alg:opt_ck_b}
    \WHILE{!converged}
    	\STATE
        Update $\mathbf{R}$
        \STATE
        Update $\{\mathbf{D}^{m, c}\}_{m = 1, c = 1}^{M, C}$
        \FOR{$i = 1 : N$}
        	\FOR{$m = 1 : M$}
	        	\STATE
	        		Get ${\mathbf{new\_\hat{b}}}_i^{m, c}$ from Alg.~\ref{alg:opt_ck_b}
	        	\IF{$g_{\text{ock}}({\mathbf{new\_\hat{b}}}_i^{m}) < g_{\text{ock}}(\hat{\mathbf{b}}_i^{m})$}
	        	\STATE
	        		$\hat{\mathbf{b}}_i^{m} = {\mathbf{new\_\hat{b}}}_i^{m}$
	        	\ENDIF
        	\ENDFOR
        \ENDFOR
    \ENDWHILE
\end{algorithmic}
\end{algorithm}

\subsection{Implementation}
In OCKM and ECKM, there are three kinds of optimizers: rotation matrix $\mathbf{R}$,
sub codebooks $\mathbf{D}^{m}$ or $\mathbf{D}^{m, c}$,
and $\mathbf{b}_i^{m}$ or $\mathbf{b}_i^{m, c}$.
In our implementation, $\mathbf{R}$ is initialized as the identity matrix $\mathbf{I}$.
The sub codebook $\mathbf{D}^{m}$ and $\mathbf{D}^{m, c}$ are initialized by randomly choosing the data on the corresponding subvector.

The solution of $\mathbf{R}$,  $\mathbf{D}^{m}$ and $\mathbf{D}^{m, c}$
are optimal in the iterative optimization process,
but the solution of $\mathbf{b}_i^m$ and $\mathbf{b}_i^{m, c}$
are sub optimal.
To guarantee that the objective function value is non-increasing in the iterative coordinate descent algorithm,
we update $\mathbf{b}_i^m$ or $\mathbf{b}_i^{m, c}$
only if the codes of Alg.~\ref{alg:ick_code} or Alg.~\ref{alg:opt_ck_b}
can provide a lower distortion error.
The whole algorithm of OCKM is shown in Alg.~\ref{alg:opt} and the one of ECKM can be similarly obtained.

The distortion errors of OCKM
with different numbers of iterations
are shown in
Fig.~\ref{fig:convergence} on SIFT1M (Sec.~\ref{sec:dataset} for the dataset description),
and we use $100$ iterations through all the experiments.
The optimization scheme is fast and
for instance on the training set of SIFT1M, the time cost of each iteration is about $4.2$ seconds in our implementations. (All the experiments are conducted on a server with an Intel Xeon 2.9GHz CPU.)

\subsection{Distance Approximation for ANN search}
\label{sec:ann_search}
In this subsection, we discuss the methods of the Euclidean ANN search by OCKM, and analyze the query time.
Since ECKM is a special case of OCKM, we only discuss OCKM.

Let $\mathbf{q}\in \mathbb{R}^{D}$ be the query point. The approximate distance to $\mathbf{x}_i$ encoded as ${\hat{\mathbf{b}}_i}^T \triangleq \begin{bmatrix}
{{}\hat{\mathbf{b}}_i^{1}}^{T} & \cdots & {{}\hat{\mathbf{b}}_i^{M}}^{T}
\end{bmatrix}$ is
\begin{align}
 & \text{distAD}(\mathbf{q}, \hat{\mathbf{b}}_i) \label{eqn:asd} \\
= & \|\mathbf{q} -
\mathbf{R}
\hat{\mathbf{D}}
\hat{\mathbf{b}}_i\|_2^2 \notag \\
= & \|\mathbf{q}\|_2^2 - 2 \sum_{m = 1}^{M} \sum_{c = 1}^{C}{{\mathbf{z}^m}^{T}(\mathbf{D}^{m, c}\mathbf{b}_i^{m, c}) + \|\hat{\mathbf{D}}\hat{\mathbf{b}}_i\|_2^2} \notag\\
\propto & \frac{1}{2}\|\mathbf{q}\|_2^2 - \sum_{m = 1}^{M}\sum_{c = 1}^{C}{{\mathbf{z}^m}^{T}(\mathbf{D}^{m, c}\mathbf{b}_i^{m, c}) + \frac{1}{2}\|\hat{\mathbf{D}}\hat{\mathbf{b}}_i\|_2^2},
\label{eqn:app_dist}
\end{align}
where $\mathbf{z}^{m}$ is the $m$-th subvector of $\mathbf{R}^{T} \mathbf{q}$.

\begin{figure}[t]
\centering
\includegraphics[width = 0.75\linewidth]{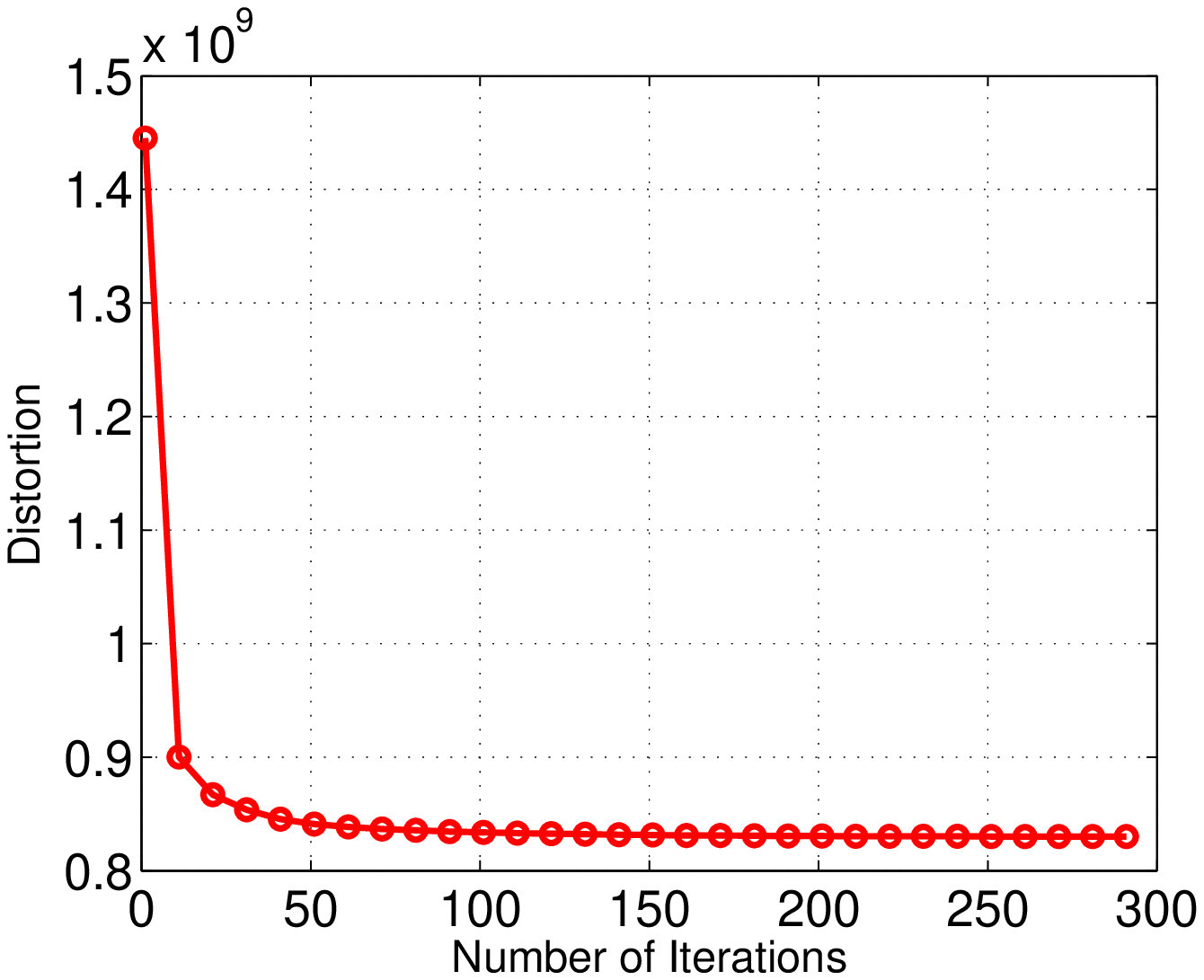}
\caption{Distortion vs the number of iterations on the training set of SIFT1M with $M = 8$, $K = 256$ and $C = 2$.}
\label{fig:convergence}
\end{figure}

The first item $\|\mathbf{q}\|_2^2 / 2$ is constant with all the database points and can be ignored in comparison.
The third item $\|\hat{\mathbf{D}}\hat{\mathbf{b}}_i\|_2^2 / 2$ is
independent of the query point.
Thus,
it is precomputed once as the lookup table
for all the quires.
This precomputation cost is not low compared with the linear scan cost for a single query,
but is negligible for a large amount of queries
which is the case in real applications.
Moreover,
this term is computed only using the binary code $\hat{\mathbf{b}}_i$
and
no access to the original $\mathbf{x}_i$ is required.
For the second item, we can pre-compute $\{-{\mathbf{z}^m}^{T}\mathbf{d}_{k}^{m, c}\}_{k = 1, m = 1, c = 1}^{K, M, C}$ and store it as the lookup tables.
Then there are  $MC + 1$ table lookups
and  $M C + 1$ addition operations
to calculate the distance.
The $1$ corresponds to the third item of Eqn.~\ref{eqn:app_dist}.

If the query point is also represented by the binary codes, denoted as $\hat{\mathbf{b}}_q$, we can recover $\mathbf{q}$
as
$\mathbf{q}'
\triangleq \mathbf{R}\hat{\mathbf{D}}\hat{\mathbf{b}}_q$.
Then the approximate distance to any database point will be identical with Eqn.~\ref{eqn:asd}, i.e.
\begin{align}
\text{distSD}(\hat{\mathbf{b}}_q, \hat{\mathbf{b}}_i) = \text{distAD}(\mathbf{q}', \hat{\mathbf{b}}_i).
\label{eqn:sd}
\end{align}

Eqn.~\ref{eqn:asd} is
usually
referred as the asymmetric distance
while Eqn.~\ref{eqn:sd} as the symmetric distance.
Since the
symmetric distance
encodes both the query
and the database points,
the accuracy is
generally lower than the
asymmetric distance,
which only encodes the
database points.

\noindent\textbf{Analysis of query time.}
We adopt an exhaustive search in which each database point is compared against the query point and the points with smallest approximate distances are returned.
The exhaustive search scheme is  fast in practice because each comparison only requires a few table lookups and additional operations.

Table~\ref{tbl:query_comparison} lists the code length and the comparison among PQ, CKM and our OCKM for exhaustive search.
Under the same code length, OCKM consumes only one more table lookup and one more addition than the others.
Considering the other computations in the querying, the differences of time cost are minor in practice.

Take $M_{\text{ck}} = 8$, $K = 256$, $C = 2$, $M_{\text{ock}} = 4$ as an example. The code length of OCKM and CKM are both $64$. The number of table lookups are $9$ for OCKM and $8$ for CKM.
With these configurations on SIFT1M data set, the exhaustive querying over 1 million database points costs about $24.3$ms
for OCKM
and $23.5$ms for CKM in our implementations.
Thus, the on-line query time is comparable with the state-of-the-art approaches, but
the proposed approach can potentially provide a better accuracy.

\begin{table}
\centering
\caption{Comparison in terms of the code length, the number of table lookups and the number of addition operations for exhaustive search.}
\label{tbl:query_comparison}
\begin{tabular}{c@{~}c@{~}c@{~}c}
\toprule
& OCKM & CKM~\cite{NorouziF13} & PQ~\cite{JegouDS11} \\
\midrule
Code Length & $M C\log_2(K)$ & $M\log_2(K)$ & $M\log_2(K)$ \\
\#(Table Lookups) & $M  C + 1$ & $M$ & $M$ \\
\#(Additions) & $M C + 1$ & $M$ & $M$ \\ \bottomrule
\end{tabular}
\end{table}

\section{Experiments}
\label{sec:exp}
\subsection{Settings}
\subsubsection{Datasets}
\label{sec:dataset}
Experiments are conducted on three
widely-used high-dimensional datasets:
SIFT1M~\cite{JegouDS11},
GIST1M~\cite{JegouDS11},
and SIFT1B~\cite{JegouDS11}.
Each dataset comprises of one training set (from which the parameters are learned),
one query set,
and one database (on which the search is performed).
SIFT1M provides $10^5$ training points,
$10^4$ query pints and $10^6$ database points with each point being a $128$-dimensional SIFT descriptor of local image structures around the feature points.
GIST1M provides $5\times 10^5$ training points, $10^3$ query points and $10^6$ database points with each point being a $960$-dimensional GIST feature.
SIFT1B is composed of $10^8$ training points, $10^4$ query
points and as large as $10^9$ database points. Following~\cite{NorouziF13}, we use the first $10^6$ training points on
the SIFT1B datasets. The whole training set is used on SIFT1M and GIST1M.

\subsubsection{Criteria}
ANN search is conducted to evaluate our proposed approaches, and three indicators are reported.

\begin{itemize}
\item{Distortion: }
distortion is referred here as the sum of the squared loss after representing each point as the binary codes or the indices of the sub codewords.
Generally speaking, the accuracy is better with a lower distortion.

\item{Recall: } recall is the proportion over all the queries where the true nearest neighbor falls within the top ranked vectors by the approximate distance.

\item{Mean overall ratio: }
mean overall ratio ~\cite{TaoYSK10} reflects the general quality of all top ranked neighbors.
Let $\mathbf{r}_i$ be the $i$-th nearest vector of a query $\mathbf{q}$ with the exact Euclidean distance,
and $\mathbf{r}_i^*$ be the $i$-th point of the ranking list by the approximate distance.
The rank-$i$ ratio, denoted by $R_i(\mathbf{q})$, is
\begin{align*}
R_i(\mathbf{q}) = \frac{\|\mathbf{q} - \mathbf{r}_i^*\|_2}{\|\mathbf{q} - \mathbf{r}_i\|_2}.
\end{align*}
The overall ratio is the mean of all $R_i(\mathbf{q})$, i.e.
\begin{align*}
\frac{1}{k}\sum_{i = 1}^{k}{R_i(\mathbf{q})}.
\end{align*}
The mean overall ratio is the mean of the overall ratios of all the queries.
When the approximate results are the same as exact search results,
the overall ratio will be  $1$.
The performance is better with a lower mean overall ratio.

\end{itemize}

\begin{figure}[ttt!]
\centering
\begin{tabular}{c@{}c}
\includegraphics[width = 0.48\linewidth]{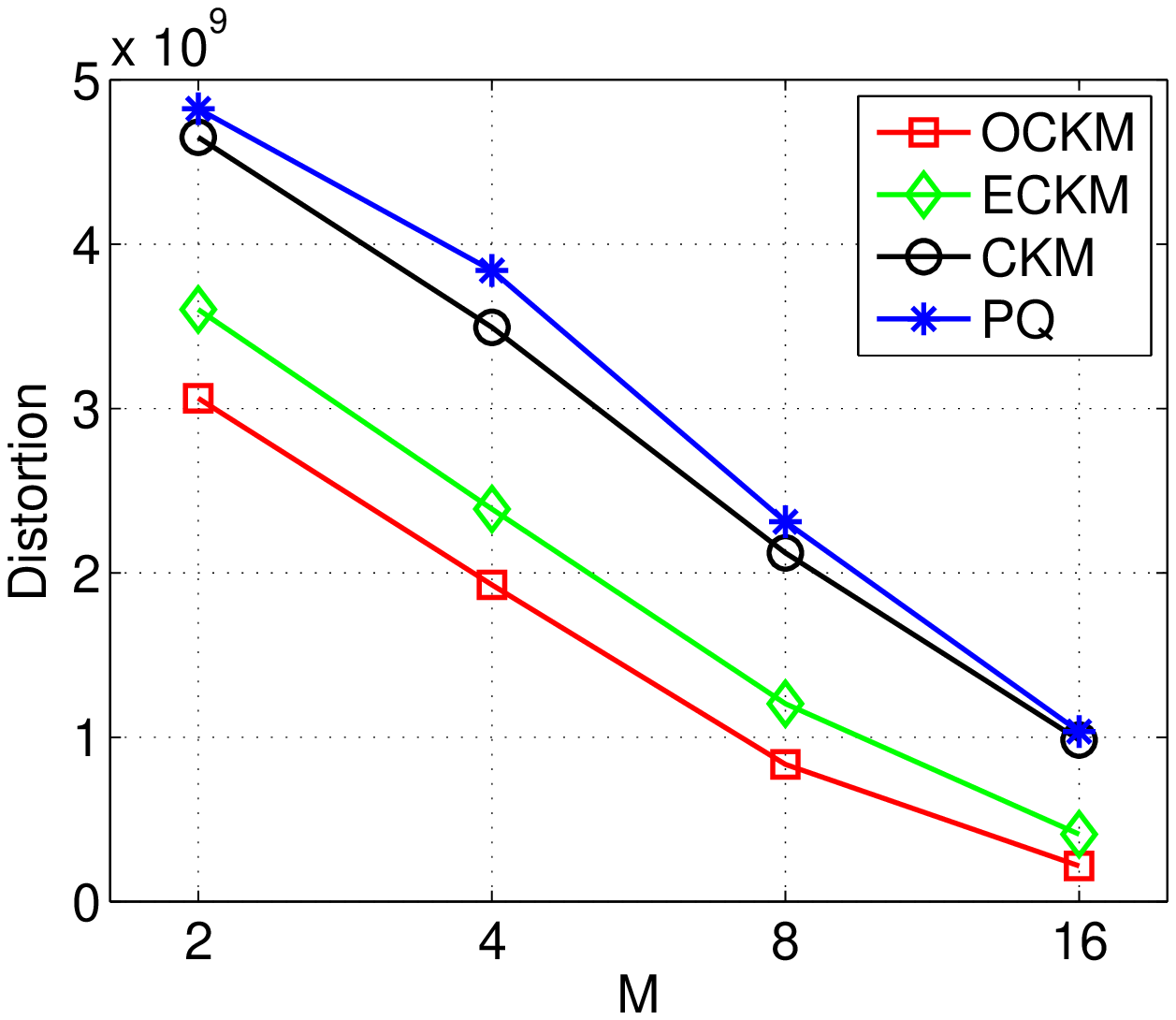}
&
\includegraphics[width = 0.48\linewidth]{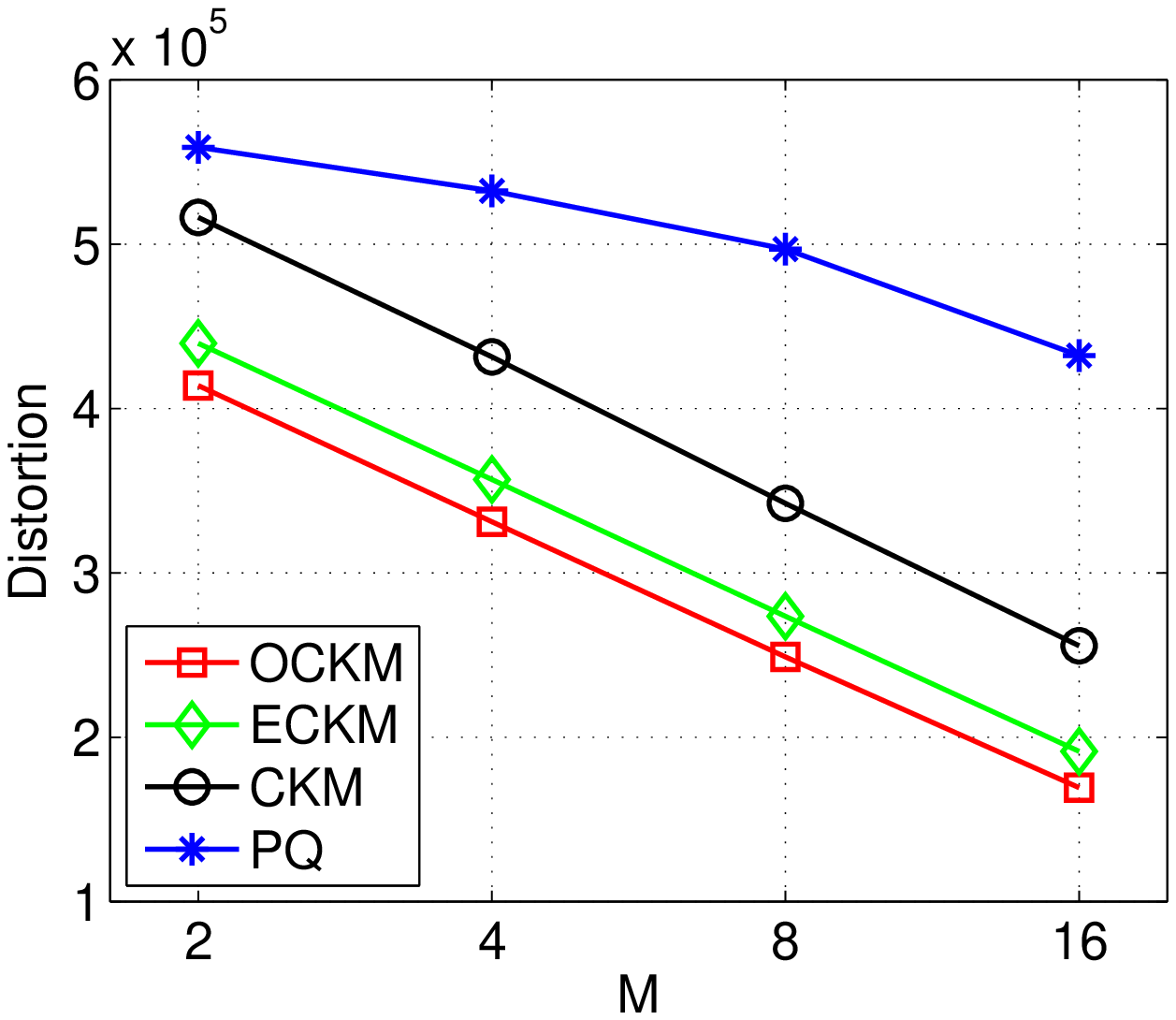} \\
(a) SIFT1M & (b) GIST1M
\end{tabular}
\caption{Distortion on the training set.}
\label{fig:distortion_train}
\end{figure}
\begin{figure}[t]
\begin{tabular}{c@{}c}
\includegraphics[width = 0.48\linewidth]{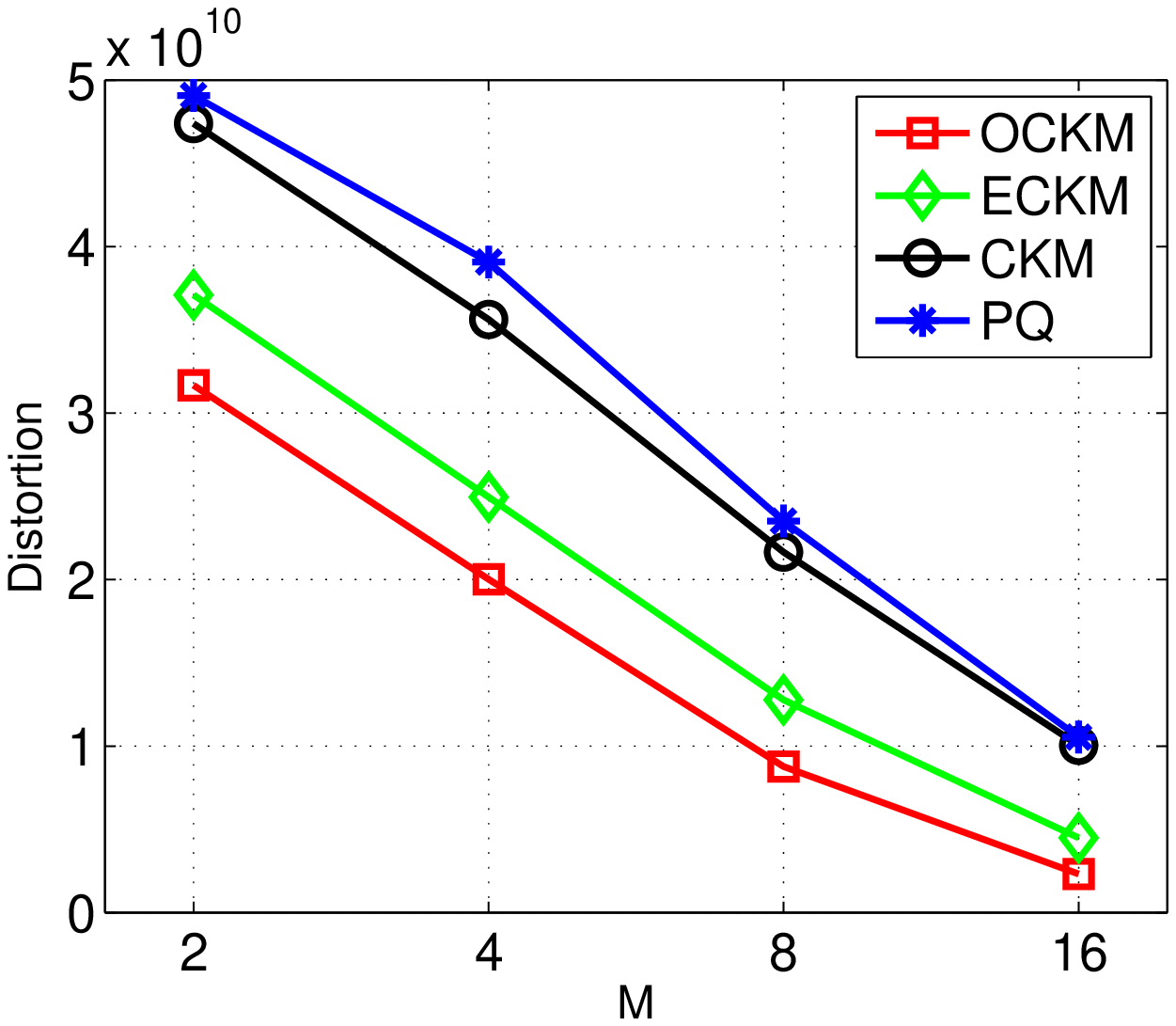}
&
\includegraphics[width = 0.48\linewidth]{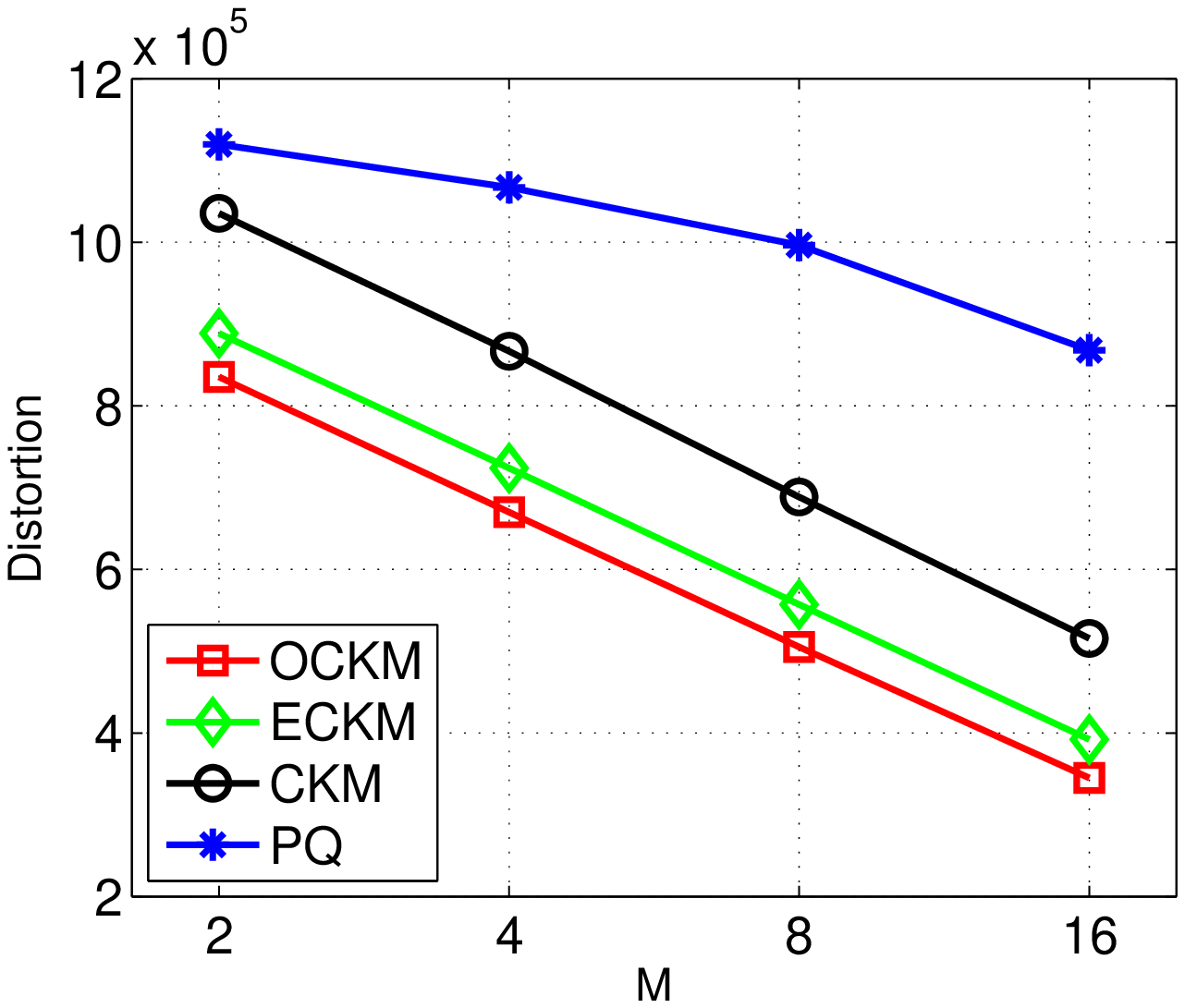} \\
(a) SIFT1M & (b) GIST1M
\end{tabular}
\caption{Distortion on the database set.}
\label{fig:distortion_base}
\end{figure}

\subsubsection{Approaches}
We compare our Optimized Cartesian $K$-Means (OCKM) with Product Quantization (PQ)~\cite{JegouDS11} and
Cartesian $K$-Means (CKM)~\cite{NorouziF13}.
Besides, the results of our extended Cartesian $K$-Means (ECKM) are also reported.
Following~\cite{NorouziF13},
we set $K = 256$ to make the lookup tables small and fit the sub index into one byte.

A suffix `-A' or `-S' is appended to the name of approaches to distinguish the asymmetric distance or the symmetric distance in ANN search.
For example, OCKM-A represents the database points are  encoded by OCKM, and the asymmetric distance is used to rank
all the database points.

We do not compare with other state-of-the-art hashing algorithms, such as spectral hashing (SH)~\cite{WeissTF08} and iterative quantization (ITQ) hashing~\cite{GongL11}, because
it is demonstrated PQ is superior over SH~\cite{JegouDS11} and  CKM is better than ITQ~\cite{NorouziF13}.

%
%

\subsection{Results}

\subsubsection{Comparison with the number of subvectors fixed}

The distortion errors on the training set and database set are illustrated in Fig.~\ref{fig:distortion_train} and Fig.~\ref{fig:distortion_base}, respectively.
From the two figures, our OCKM achieves the lowest distortion, followed by ECKM. This is because under the same $M$, both CKM and ECKM are the special case of OCKM, as discussed in 
Theorem~\ref{thm:same_number_subvector}.

\begin{figure}[t]
\centering
\begin{tabular}{c@{}c}
\includegraphics[width = \figSingleTwoWidth\linewidth]{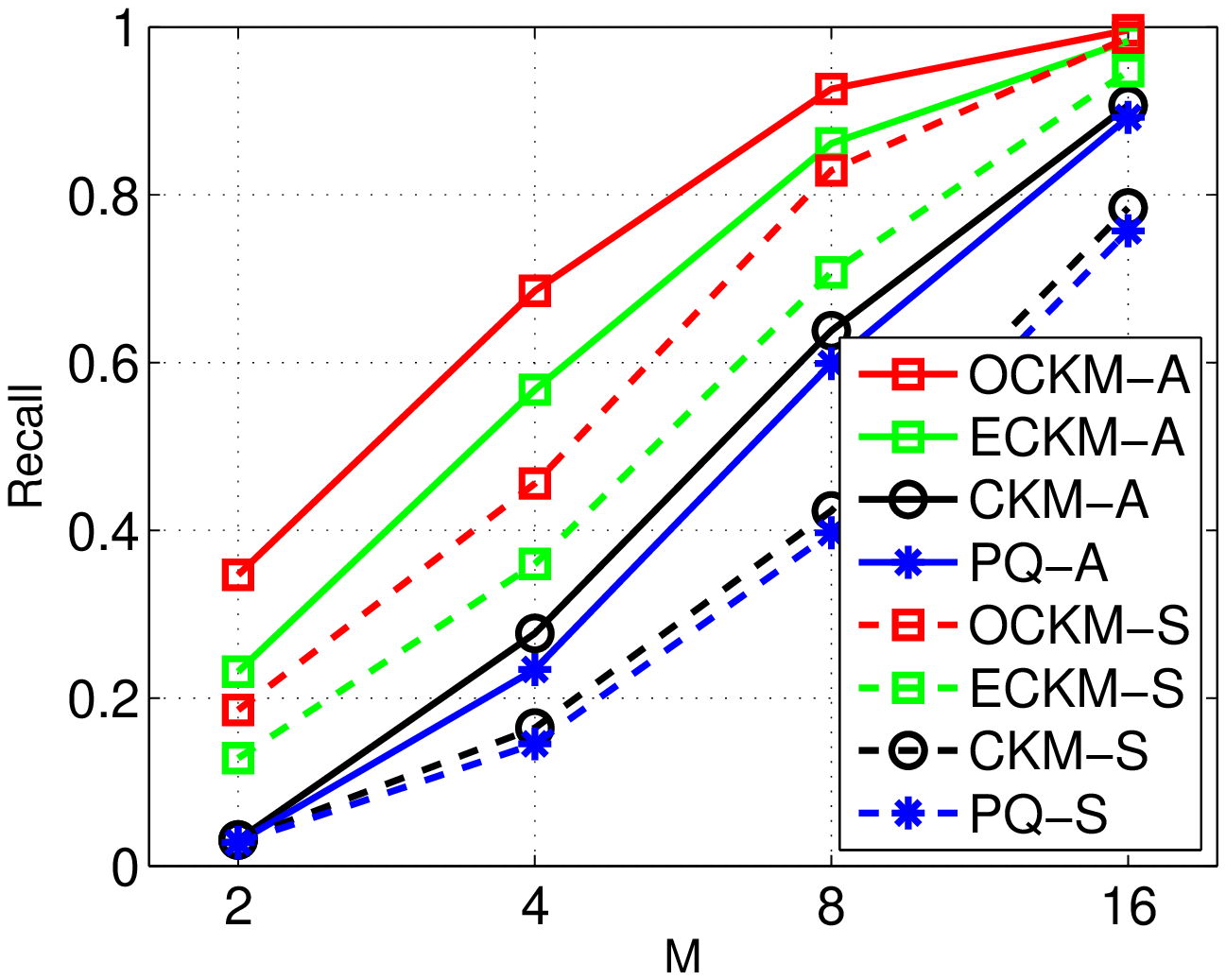}
&
\includegraphics[width = \figSingleTwoWidth\linewidth]{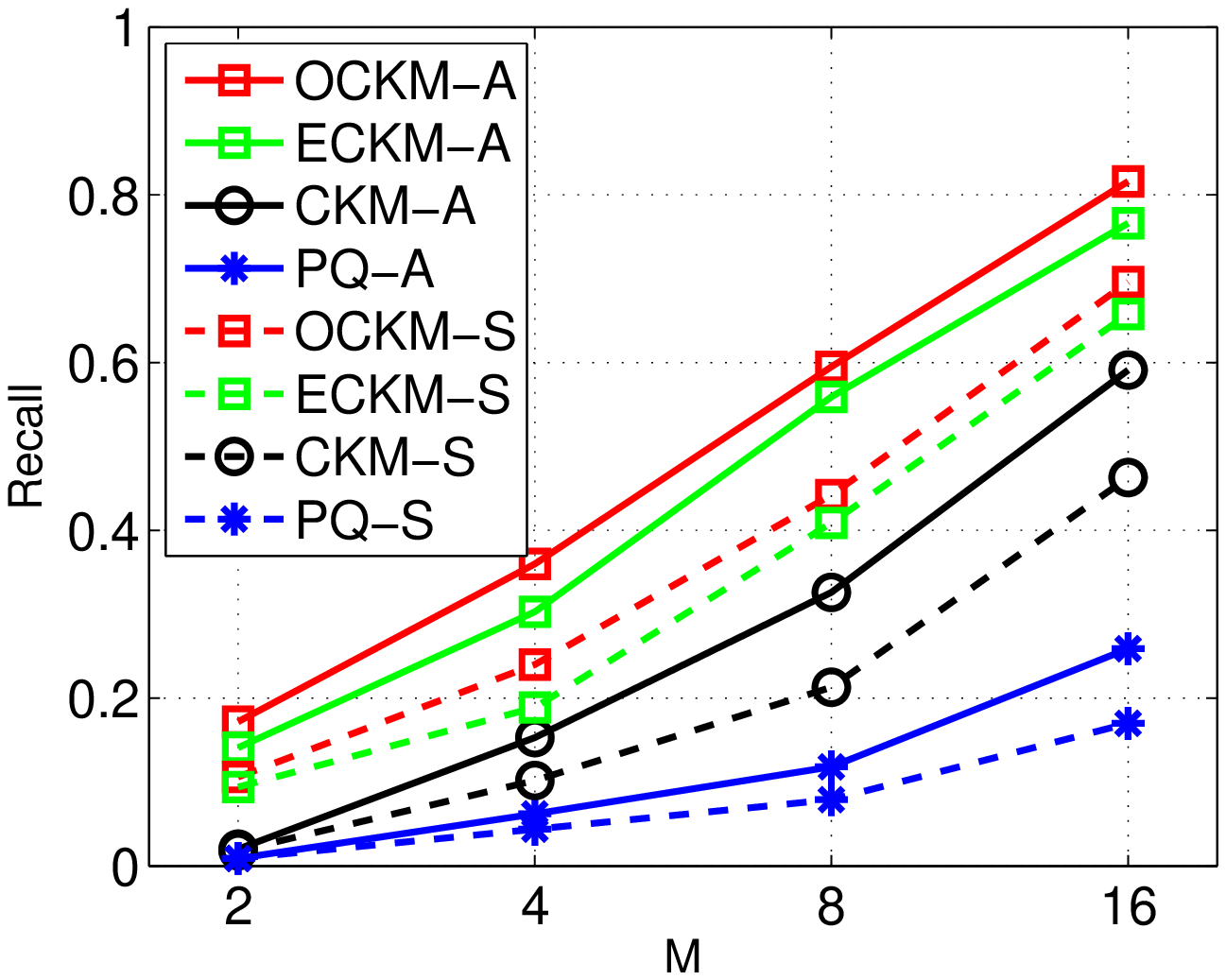} \\
(a) SIFT1M & (b) GIST1M
\end{tabular}
\caption{Recall for ANN search
at the $10$-th top ranked point.}
\label{fig:rec10}
\end{figure}

\begin{figure}
\centering
\begin{tabular}{c@{}c}
\includegraphics[width = \figSingleTwoWidth\linewidth]{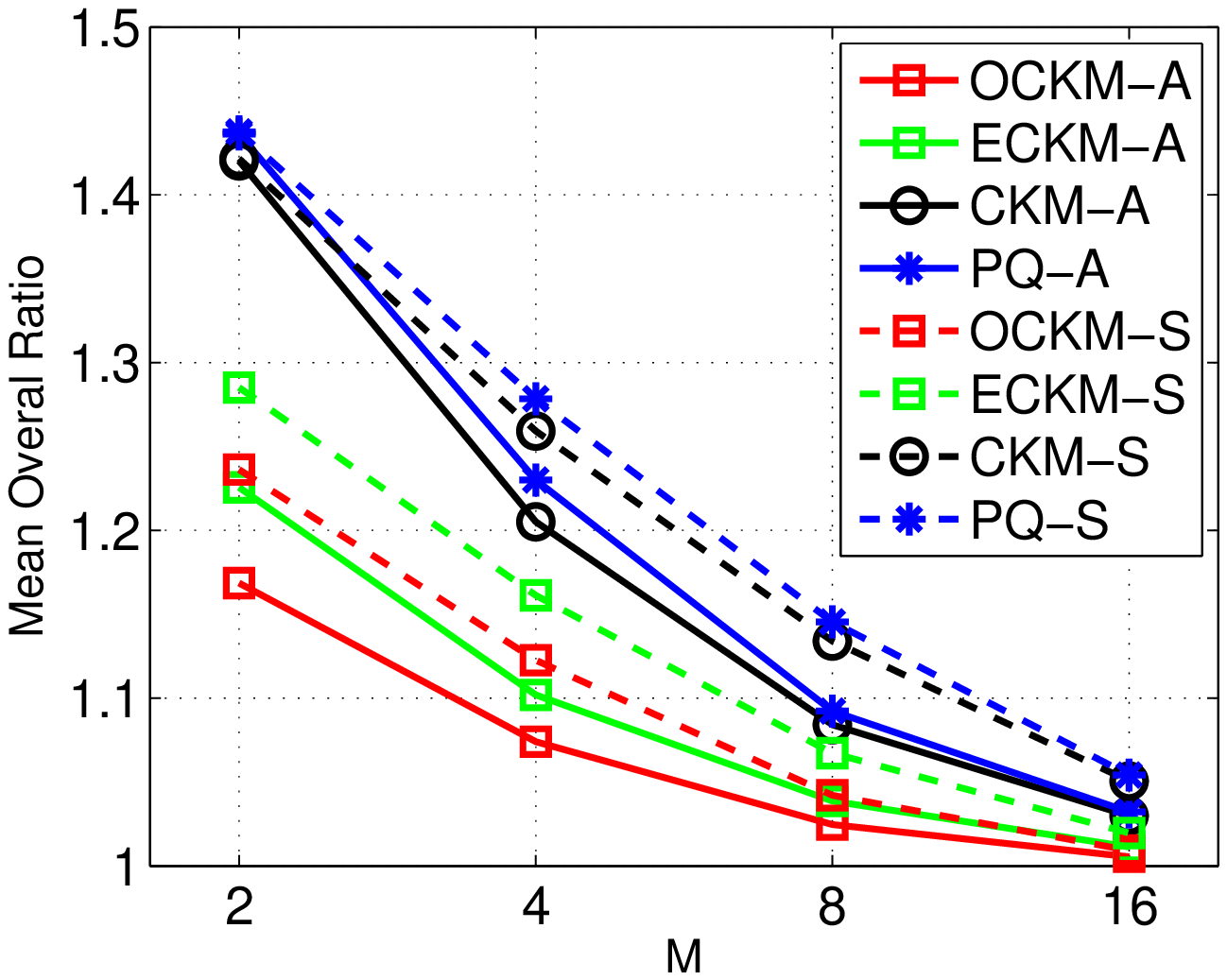} &
\includegraphics[width = \figSingleTwoWidth\linewidth]{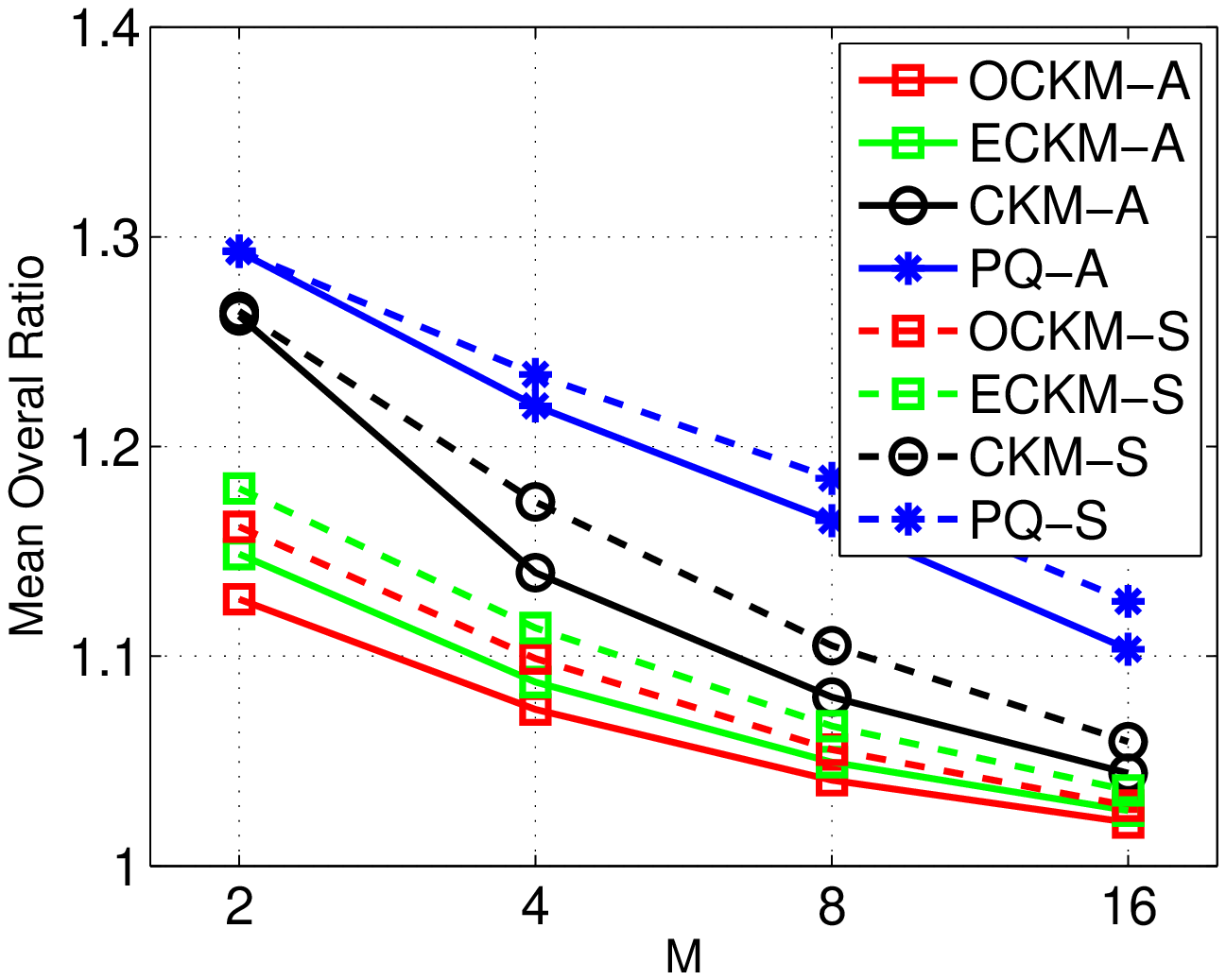} \\
(a) SIFT1M & (b) GIST1M
\end{tabular}
\caption{Mean overall ratio for ANN search
at the $10$-th top ranked point. }
\label{fig:mor10}
\end{figure}

Fig.~\ref{fig:rec10} and Fig.~\ref{fig:mor10}
show the recall and
the mean overall ratio
for ANN search
at
the $10$-th top ranked point, respectively.
With the same type of the approximate distance,
our approach OCKM
achieves the best performance: the highest recall and the lowest mean overall ratio.
With the lowest distortion errors demonstrated in
Fig.~\ref{fig:rec10} and Fig.~\ref{fig:mor10},
the OCKM is more accurate for encoding the data points.

\begin{figure*}[t]
\centering
\begin{tabular}{c@{}c@{}c}
32	& 	64	& 128 \\
\includegraphics[width = 0.32\linewidth]{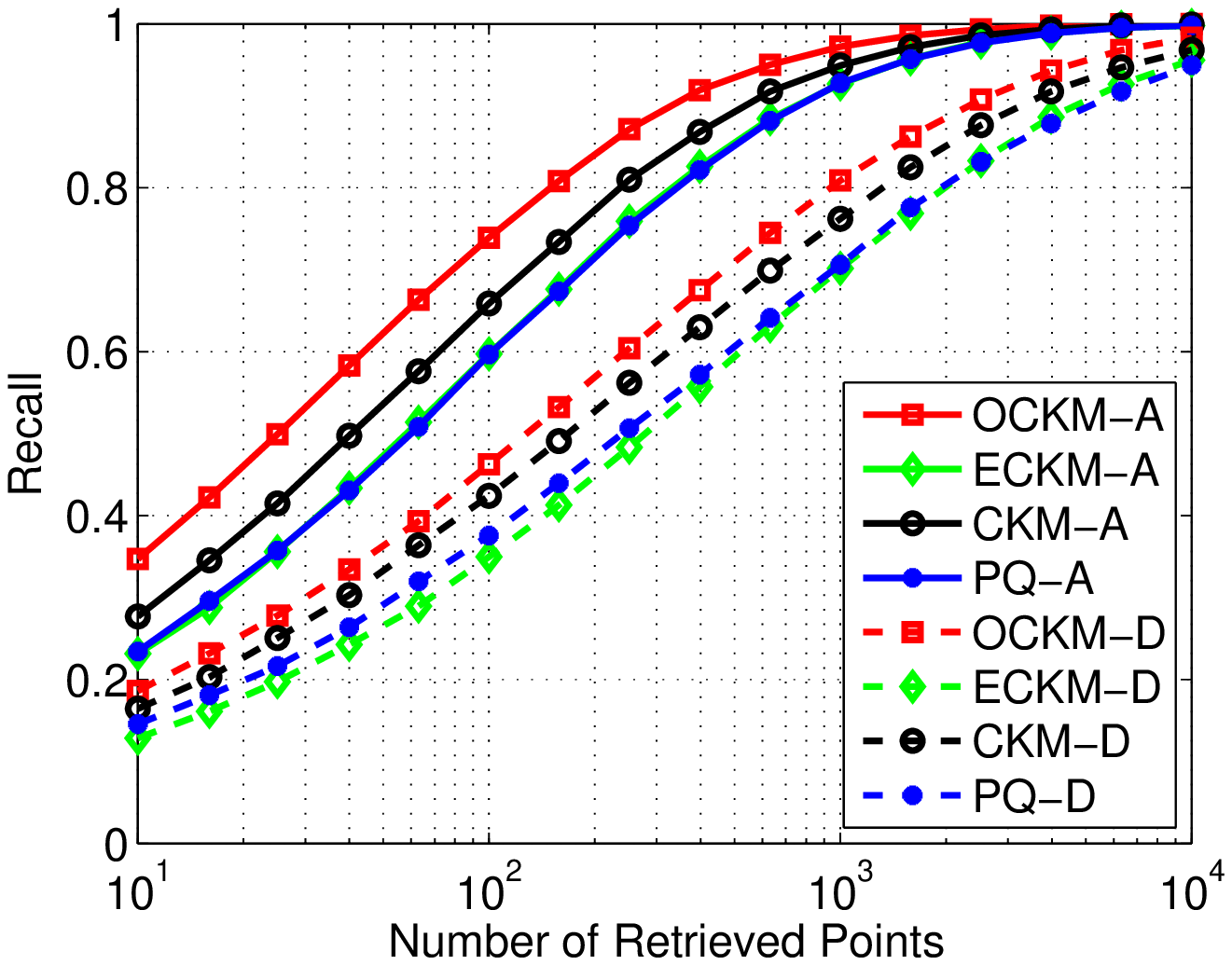} &
\includegraphics[width = 0.32\linewidth]{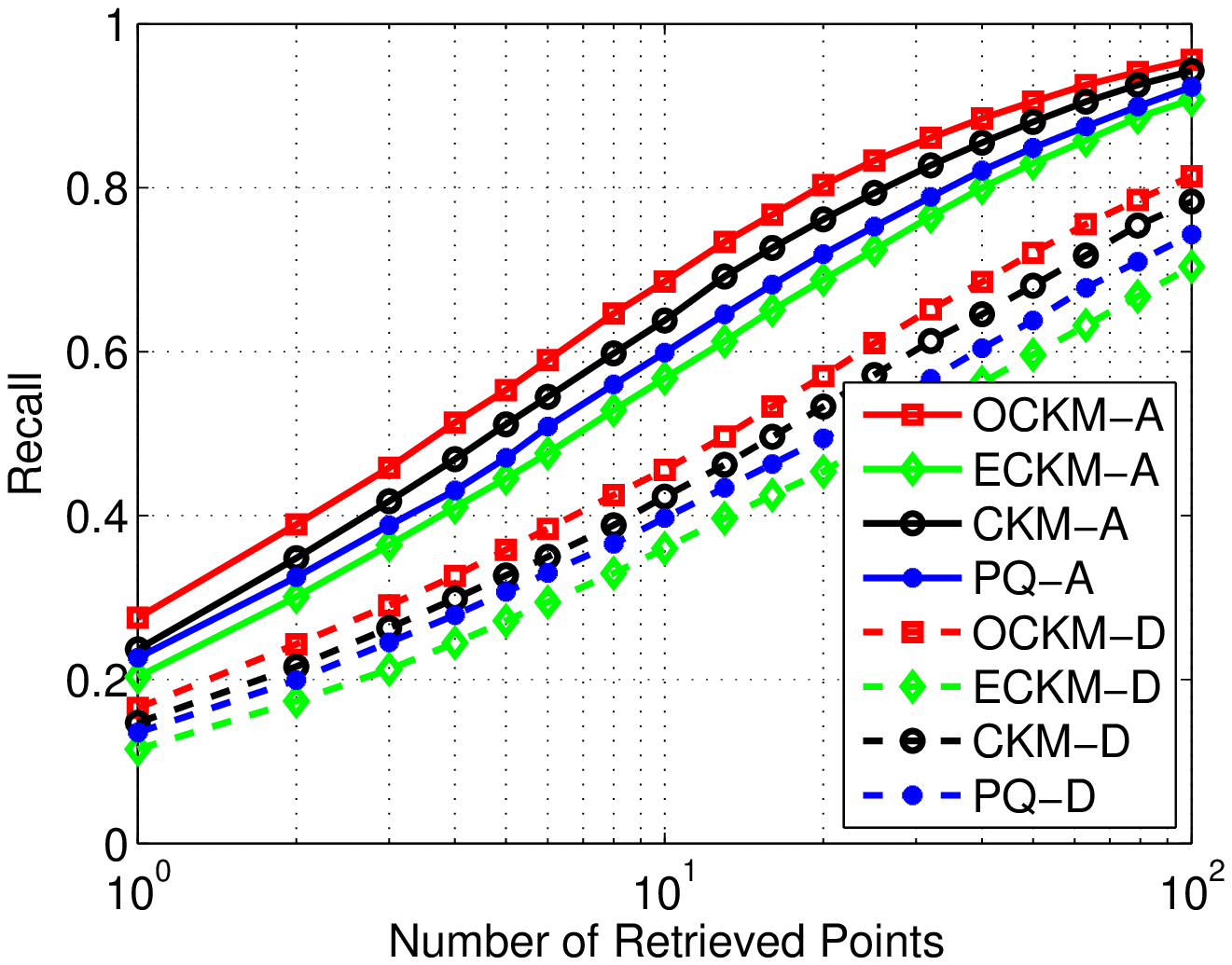} &
\includegraphics[width = 0.32\linewidth]{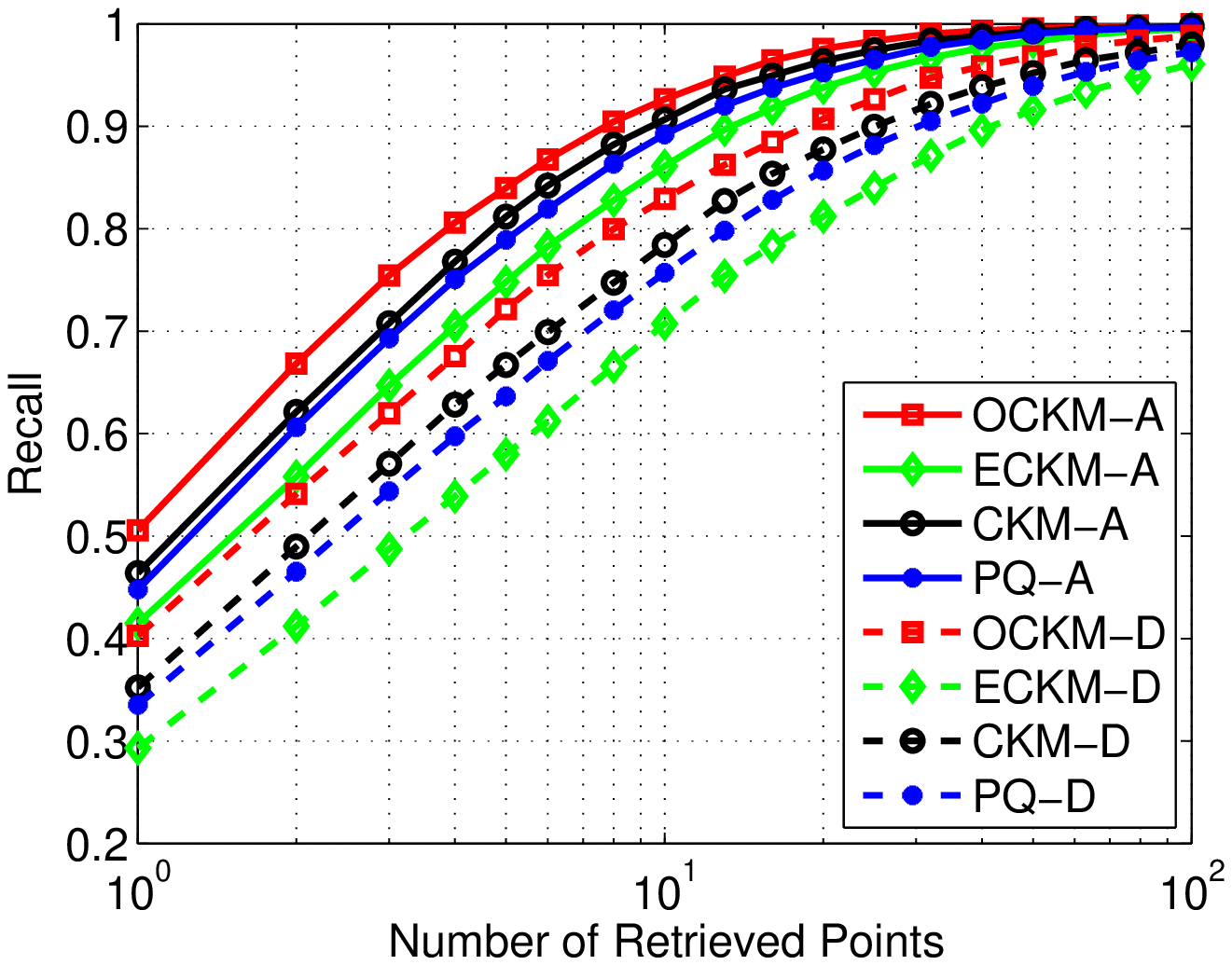} \\
(a)  & (b) & (c) \\
\includegraphics[width = 0.32\linewidth]{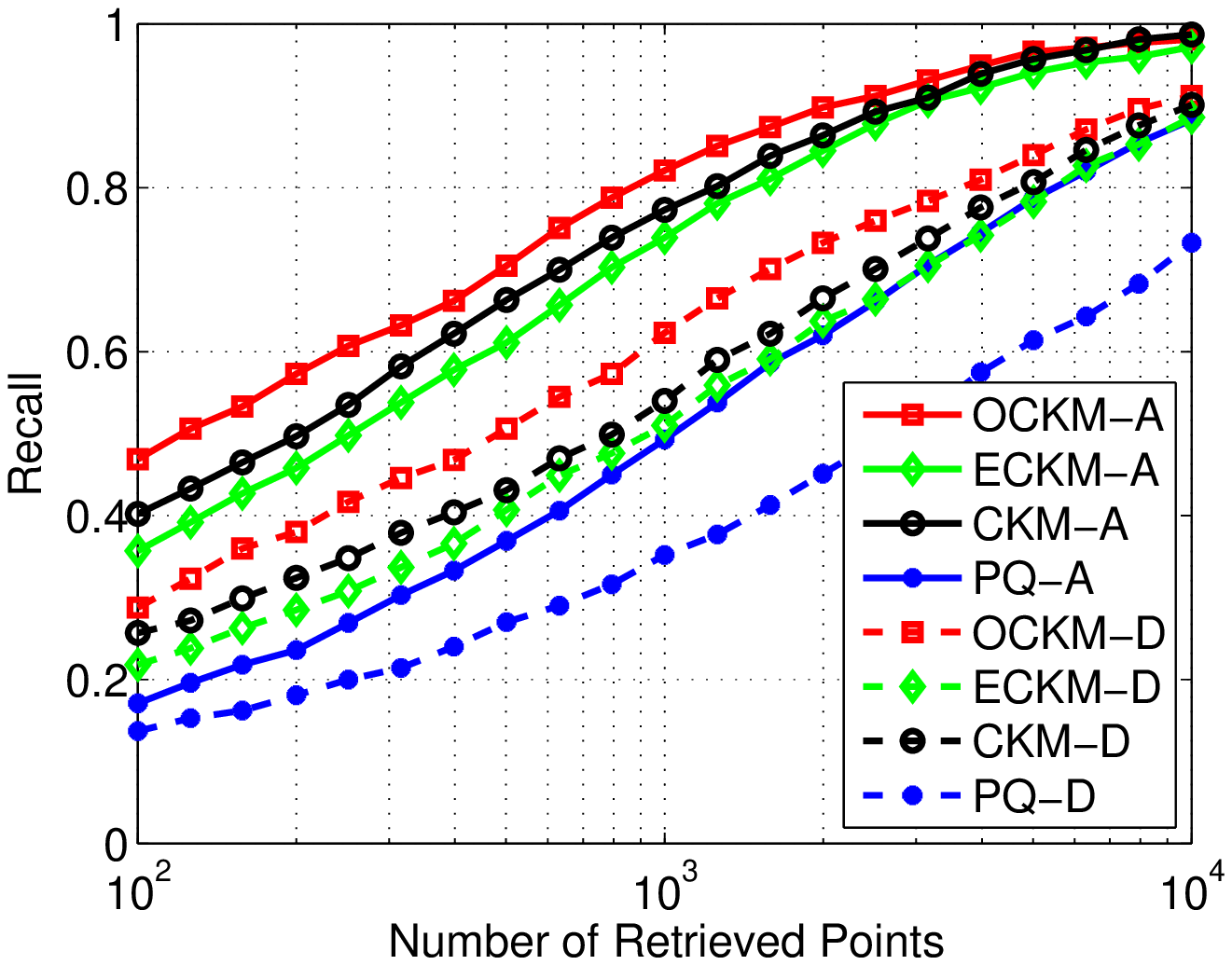} &
\includegraphics[width = 0.32\linewidth]{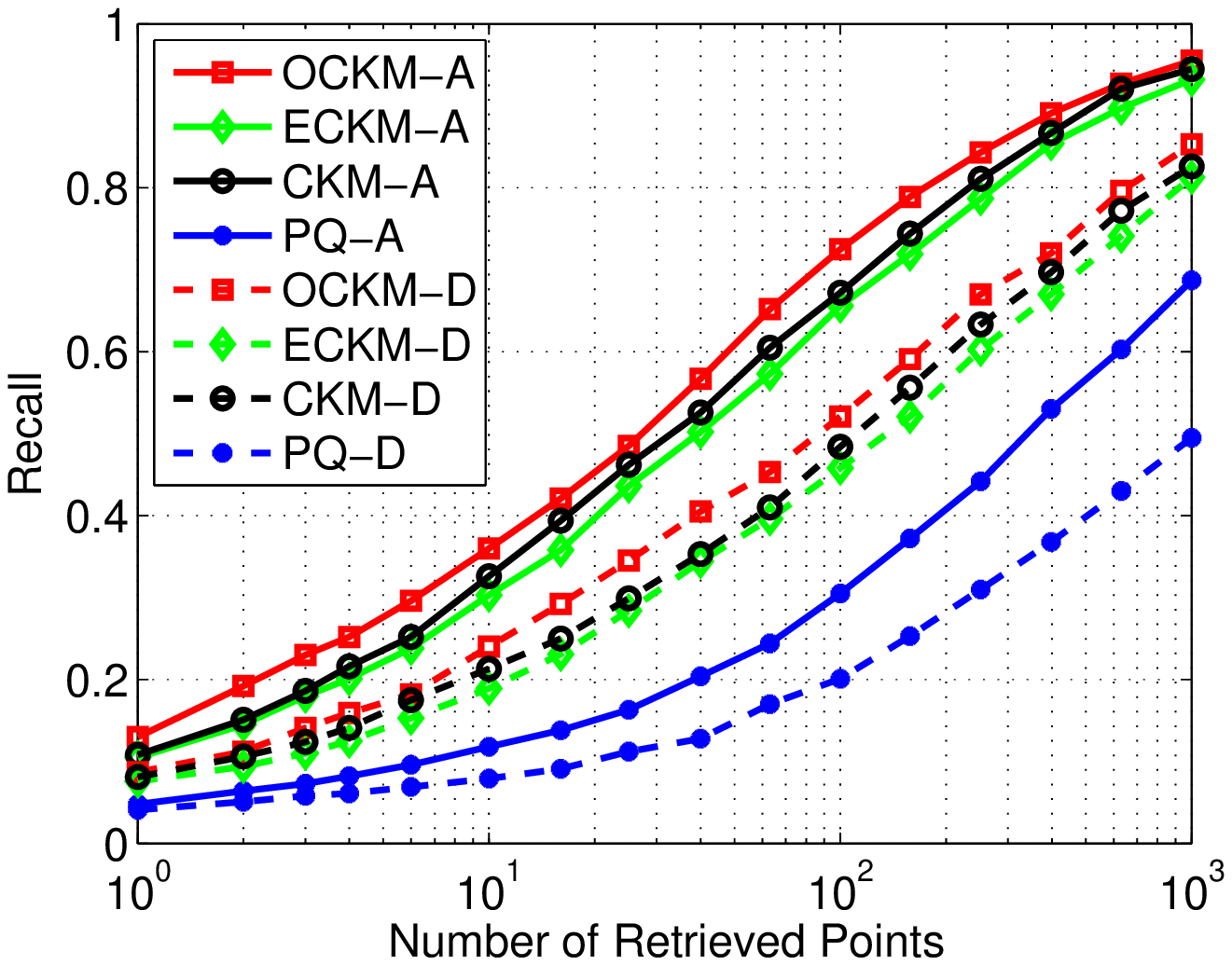} &
\includegraphics[width = 0.32\linewidth]{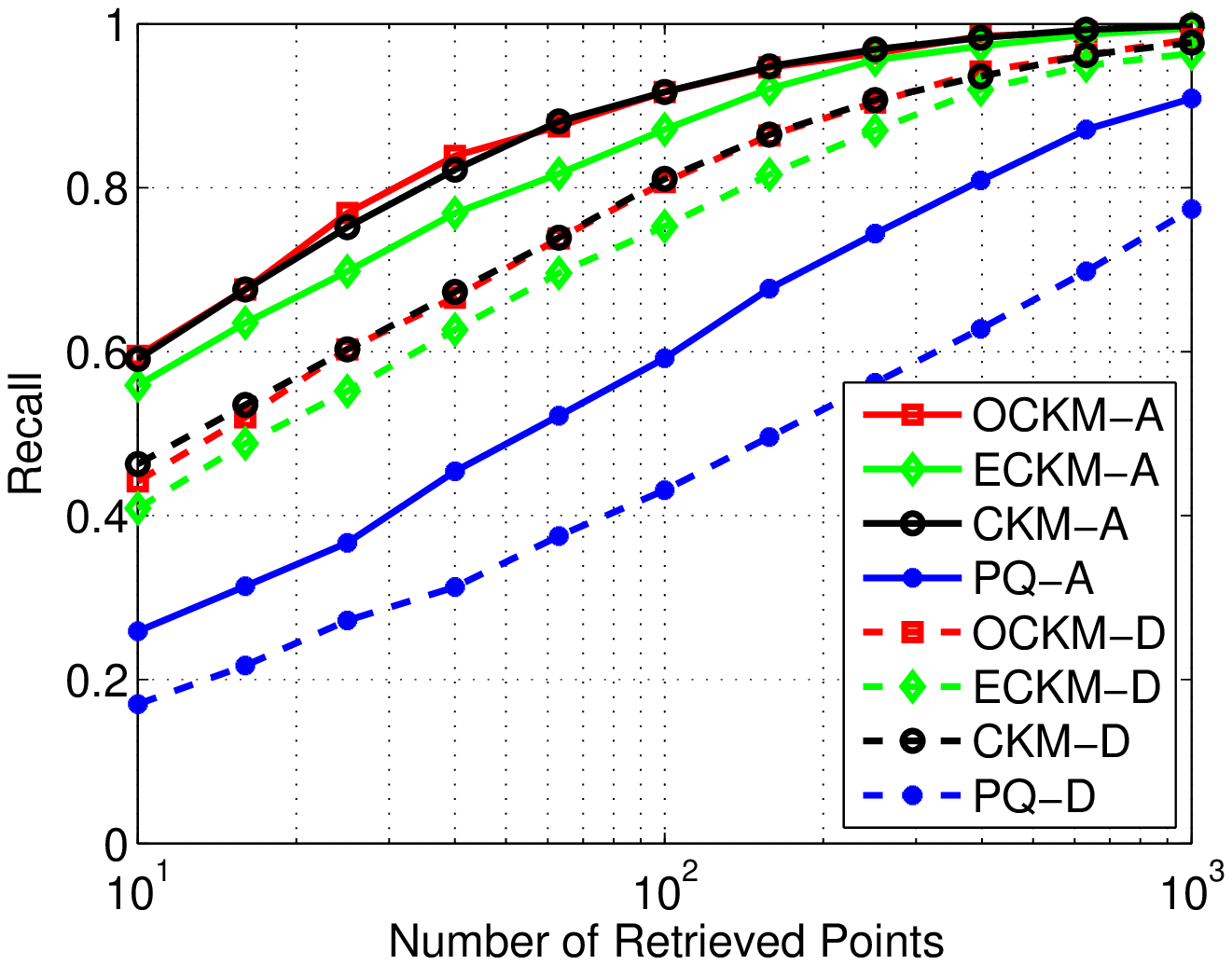} \\
(d) & (e) & (f) \\
\includegraphics[width = 0.32\linewidth]{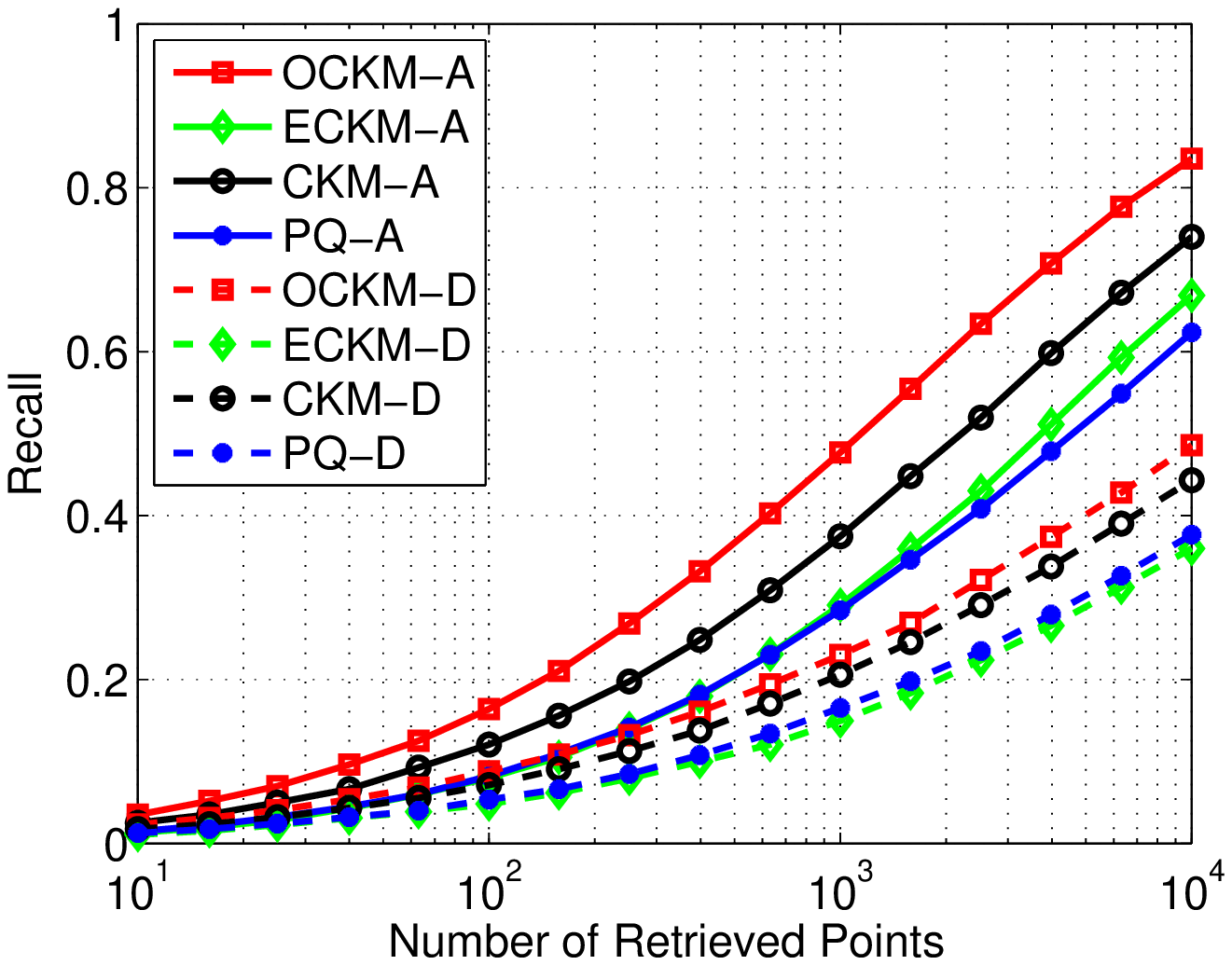} &
\includegraphics[width = 0.32\linewidth]{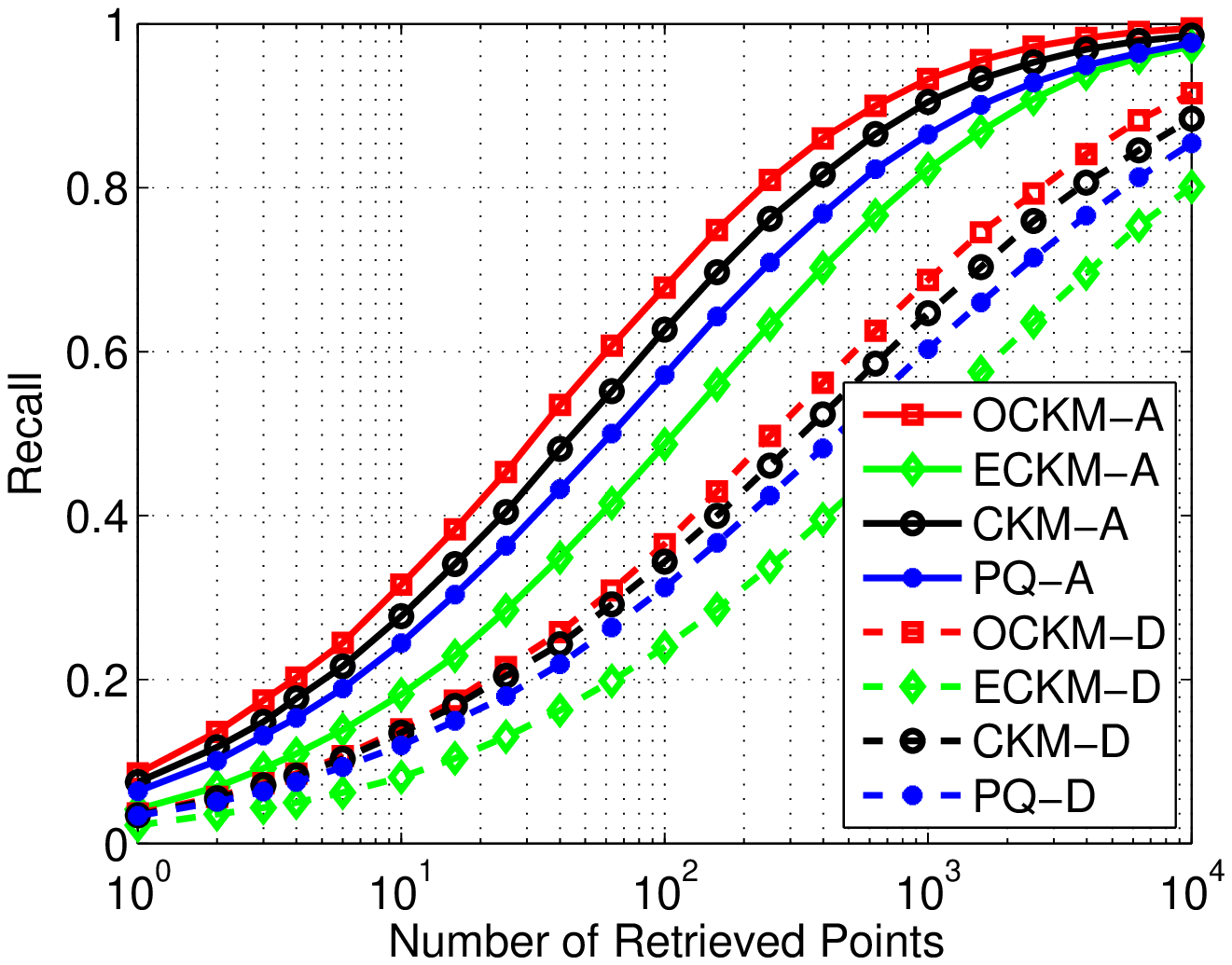} &
\includegraphics[width = 0.32\linewidth]{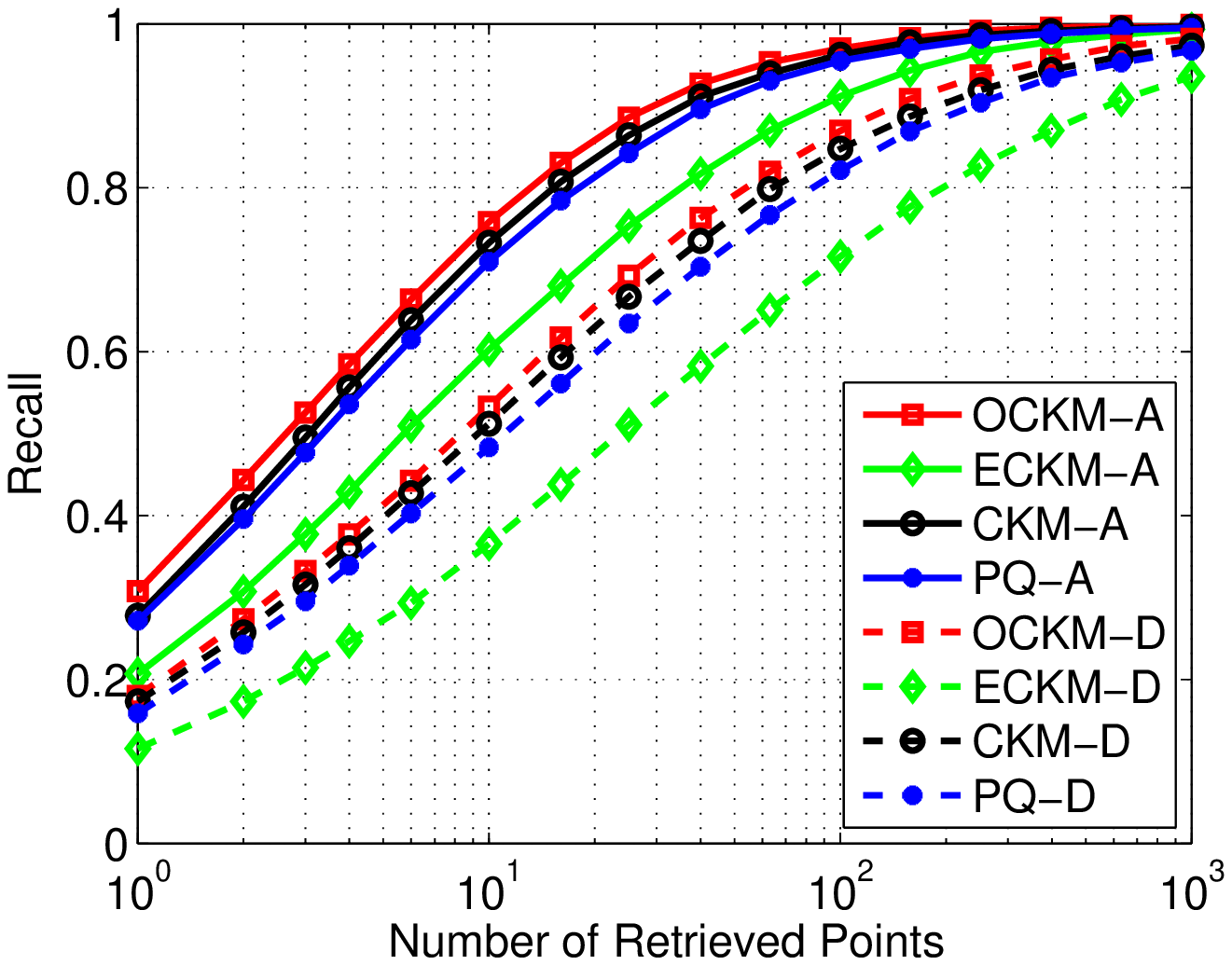} \\
(g) & (h) & (i)
\end{tabular}
\caption{Recall for ANN search. The first row corresponds to SIFT1M; the second to GIST1M; and the third to SIFT1B.
The code lengths are $32$, $64$ and $128$ from
the left-most column to the right-most.}
\label{fig:both_rec_32_64_128}
\end{figure*}

\subsubsection{Comparison with the code length fixed}
We use $M_{\text{ock}}$, $M_{\text{eck}}$, $M_{\text{ck}}$, $M_{\text{pq}}$
to denote the number of subvectors in
OCKM, ECKM, CKM, and PQ,
respectively.
The code length of CKM is $M_{\text{ck}}\log_2(K)$,
while the code length of OCKM is $M_{\text{ock}}C\log_2(K)$.
Fixing $C = 2$
as the analysis in Sec.~\ref{subsubsec:solve_b}, we set
$M_{\text{ock}} = M_{\text{ck}} / 2$
with $M_{\text{ck}}$ being $4$, $8$, and $16$ for code length $32$, $64$ and $128$, respectively.
The $M_{\text{pq}}$ is identical with $M_{\text{ck}}$,
while $M_{\text{eck}}$ is with $M_{\text{ock}}$.
In this way, the code length is identical through all
the approaches.

The results in terms of recall on SIFT1M, GIST1M, and SIFT1B
are shown in Fig.~\ref{fig:both_rec_32_64_128}.
From these results, we can see that:

\begin{figure}[t]
\centering
\begin{tabular}{c@{}c}
\includegraphics[width = \figSingleTwoWidth\linewidth]{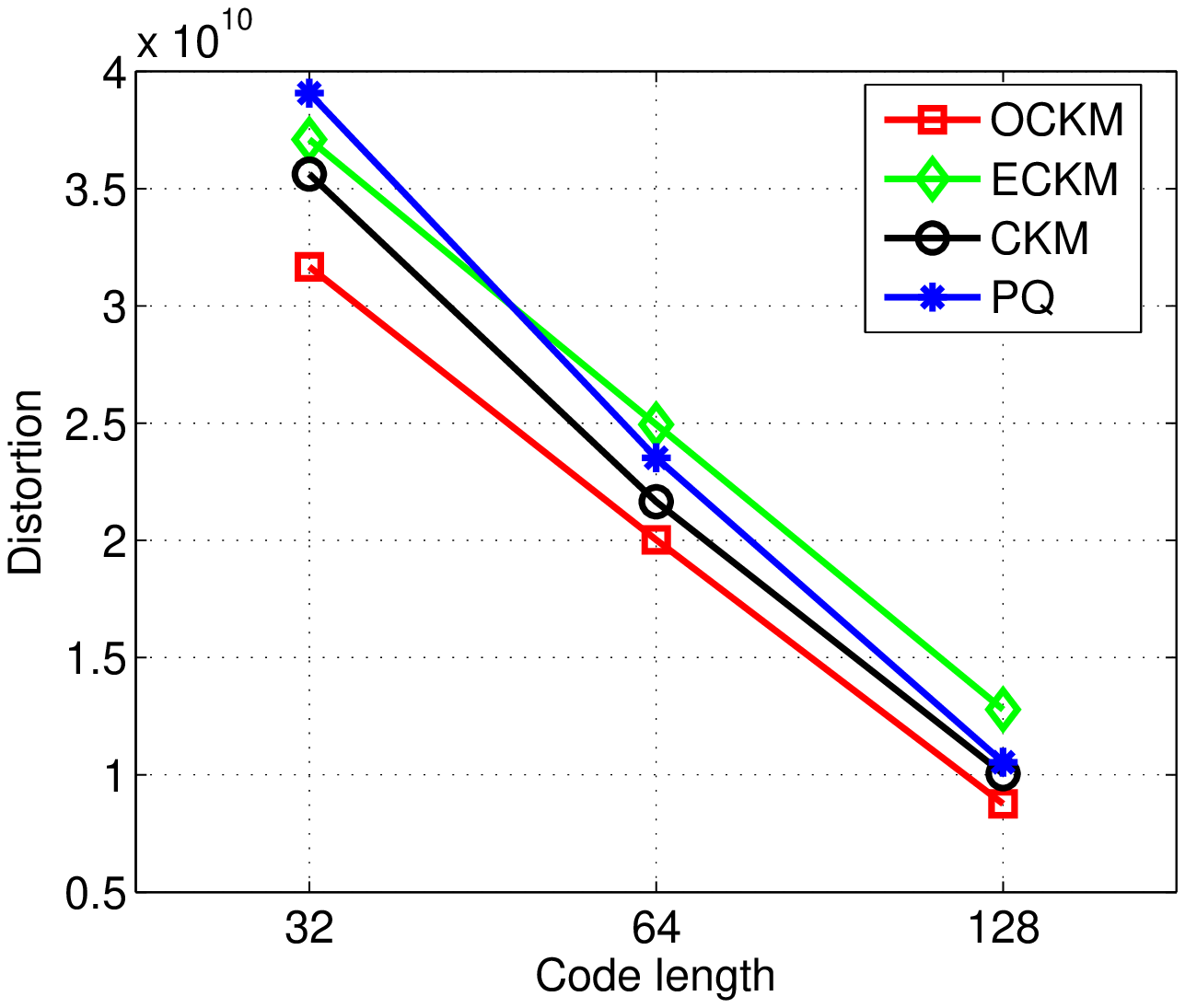}
&
\includegraphics[width = \figSingleTwoWidth\linewidth]{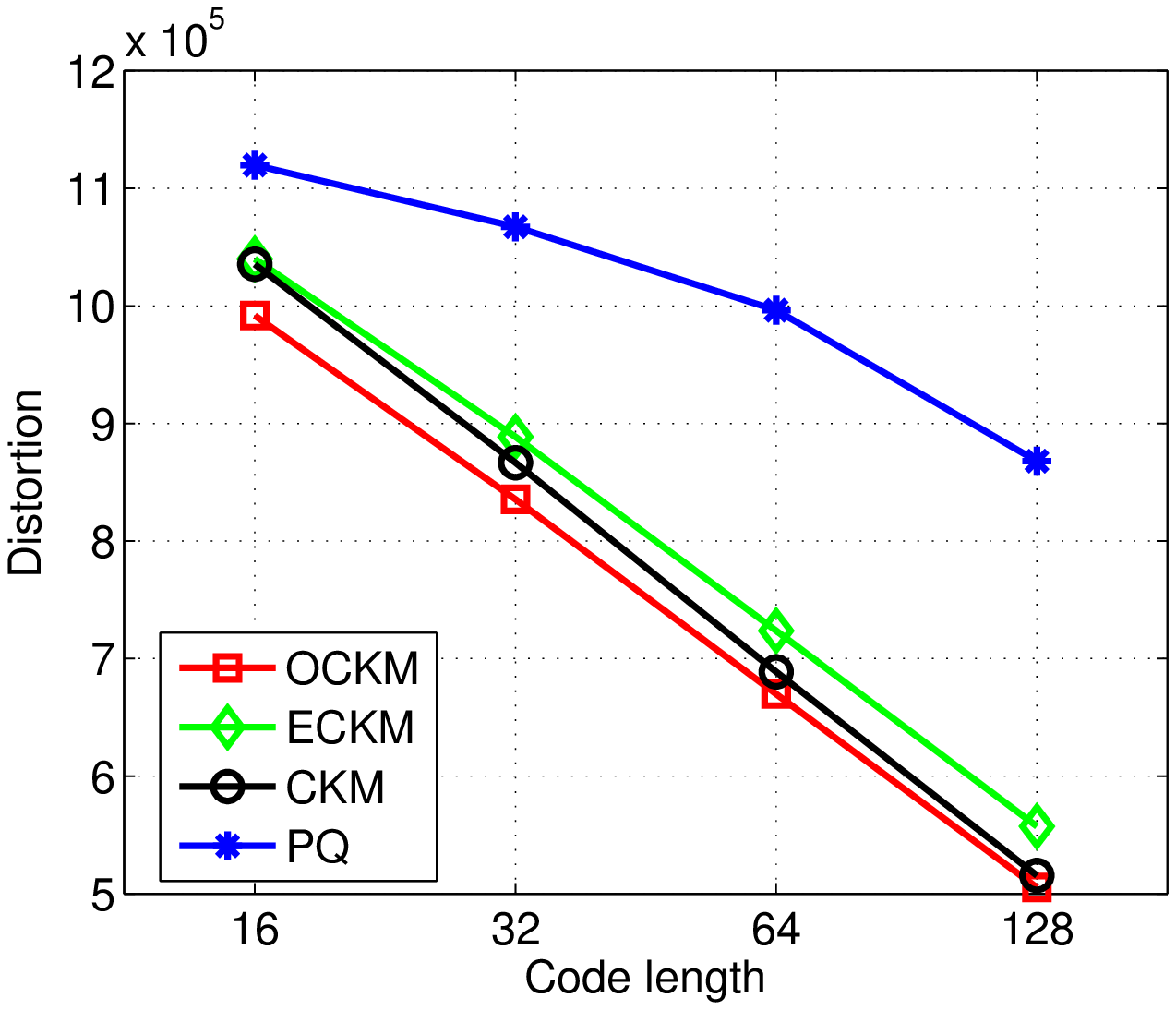}
\\
(a) SIFT1M & (b) GIST1M
\end{tabular}
\caption{Distortion under the same code length on the database set.}
\label{fig:distortion_code_length}
\end{figure}

\begin{figure*}
\begin{tabular}{c@{}c@{}c}
\includegraphics[width = \figDoubleThreeWidth\linewidth]{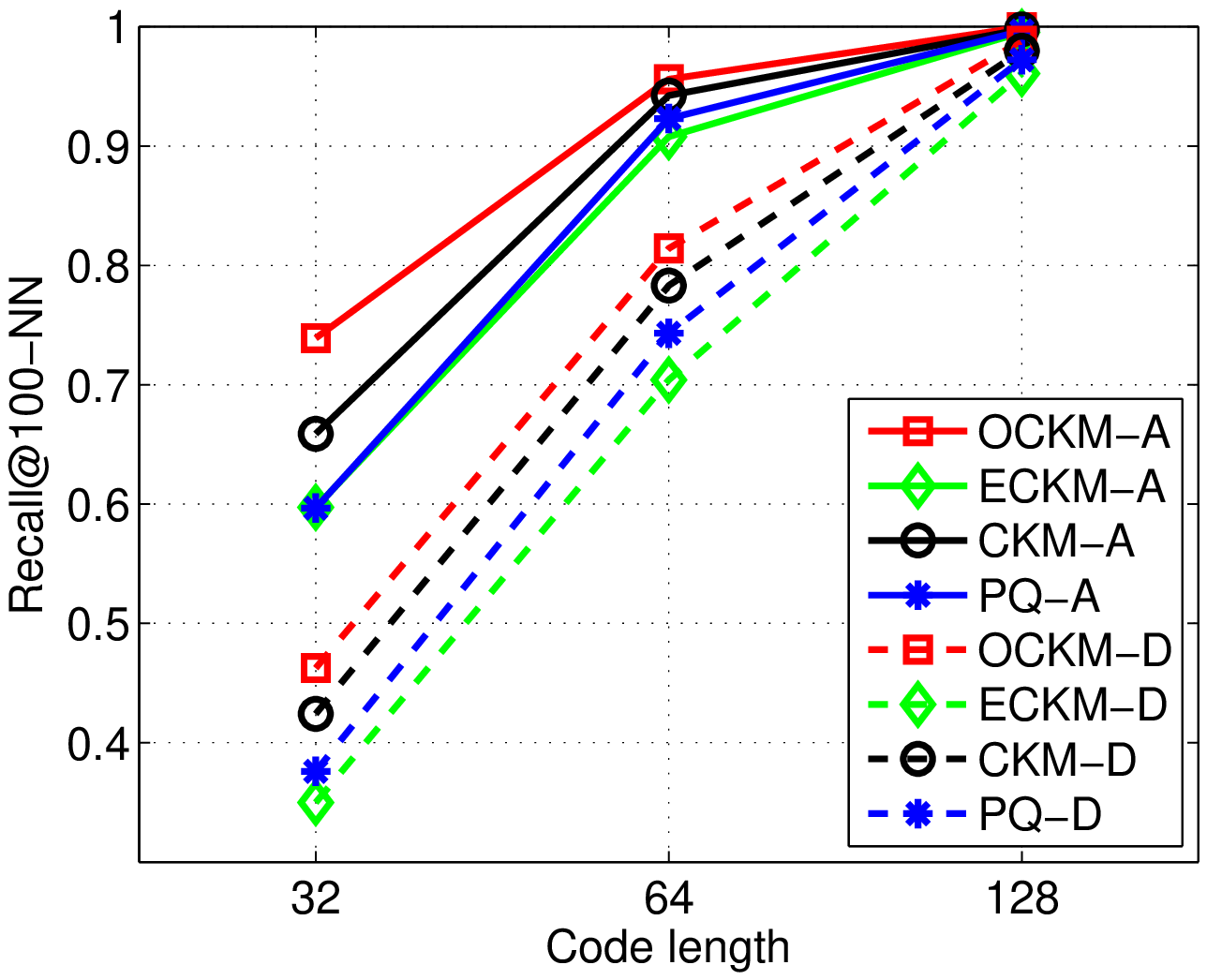}
&
\includegraphics[width = \figDoubleThreeWidth\linewidth]{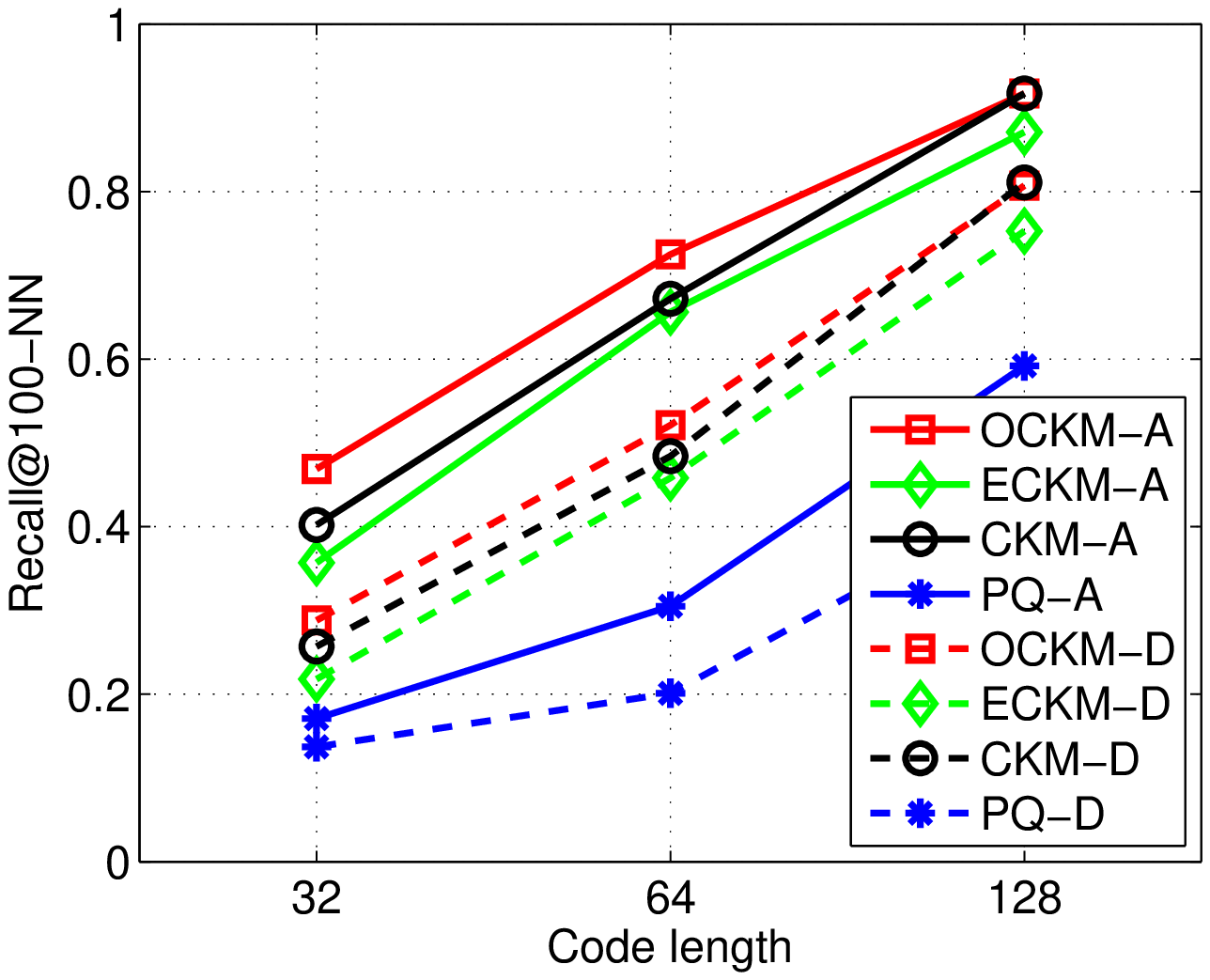}
&
\includegraphics[width = \figDoubleThreeWidth\linewidth]{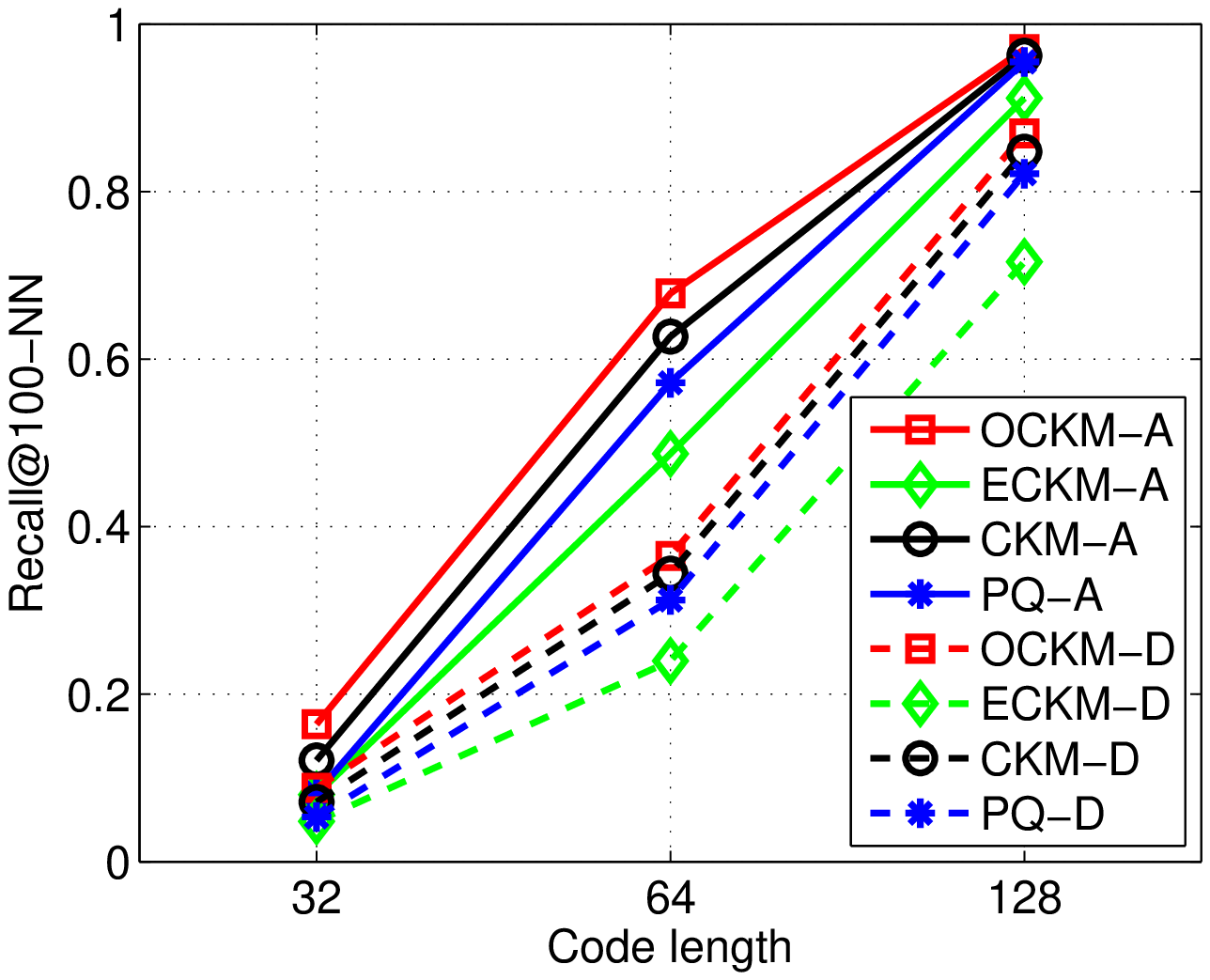} \\
(a) SIFT1M & (b) GIST1M & (c) SIFT1B
\end{tabular}
\caption{Recall at the $100$-th top ranked point under
the same code length.}
\label{fig:recall100_code}
\end{figure*}

\begin{itemize}
\item
Generally, our OCKM outperforms all the others under the same type of approximate distance.
For example of the asymmetric distance with 64 bits,
the improvement of OCKM is about $5$ percents on SIFT1M
in Fig.~\ref{fig:both_rec_32_64_128} (b), $4$ percents
on GIST1M in Fig.~\ref{fig:both_rec_32_64_128} (e),
$4$ percents on SIFT1B in Fig.~\ref{fig:both_rec_32_64_128} (h)
at the $10$-th top ranked point.
The performance of OCKM mainly benefits from
the low distortion errors,
which is also discussed in Theorem~\ref{thm:code_length}.
Fig.~\ref{fig:distortion_code_length}
illustrates the distortion on the database under
the same code length for SIFT1M and GIST1M.
We can see under the same code length, our approach
achieves the lowest distortions.

\item The improvement is even better with a smaller code length.
To present the observation more clearly, we
extract the
recall at the $100$-th nearest neighbor from Fig.~\ref{fig:both_rec_32_64_128}
and plot Fig.~\ref{fig:recall100_code}.
With a larger code length, the recalls of
our OCKM and the second best CKM approach $1$.
With a smaller code length, our OCKM
gains larger improvement.


\item
ECKM is not quite competitive with the same code length. The possible reason is that the number of sub codebooks is smaller than those of the others. Take the code length of $64$ bits as an example.
There are $8$ subvectors and each has one sub codebook for PQ and CKM, resulting in $8$ sub codebooks. OCKM is equipped with $4$ subvectors, but each has two sub codebooks, also resulting in $8$ sub codebooks.
Comparatively, ECKM has $4$ subvectors, each of which has one sub codebook, and there are only $4$ sub codebooks in total.
Smaller numbers of sub codebooks may degrade the
performance of ECKM.
Compared with SIFT1M and SIFT1B,
ECKM achieves even better results than PQ on GIST1M,
which indicates GIST1M is more sensitive to the rotation.


\end{itemize}

Fig.~\ref{fig:both_32_64_128}
illustrates the experiment results in terms of mean overall ratio with different code lengths on SIFT1M and GIST1M.
Mean overall ratio captures the whole quality of the returned points while the recall captures the position of the nearest neighbor and ignores the quality of the other points.
Under this criterion, our OCKM achieves the lowest mean overall ratio and outperforms all the others.
This implies the returned nearest neighbors of OCKM are
of high quality and close to the query points.

\begin{figure*}
\centering
\begin{tabular}{c@{}c@{}c}
32	& 	64	& 128 \\
\includegraphics[width = 0.32\linewidth]{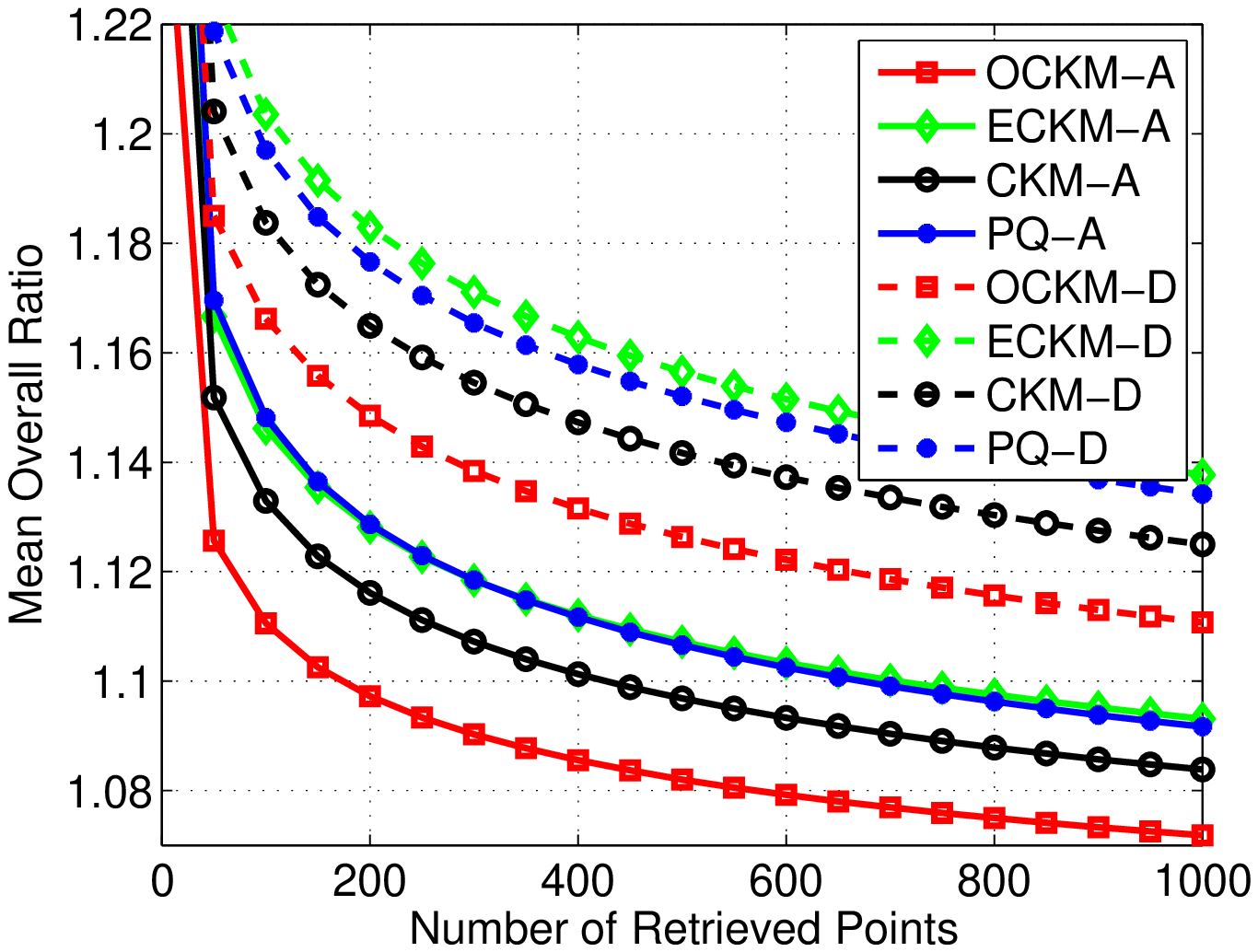} &
\includegraphics[width = 0.32\linewidth]{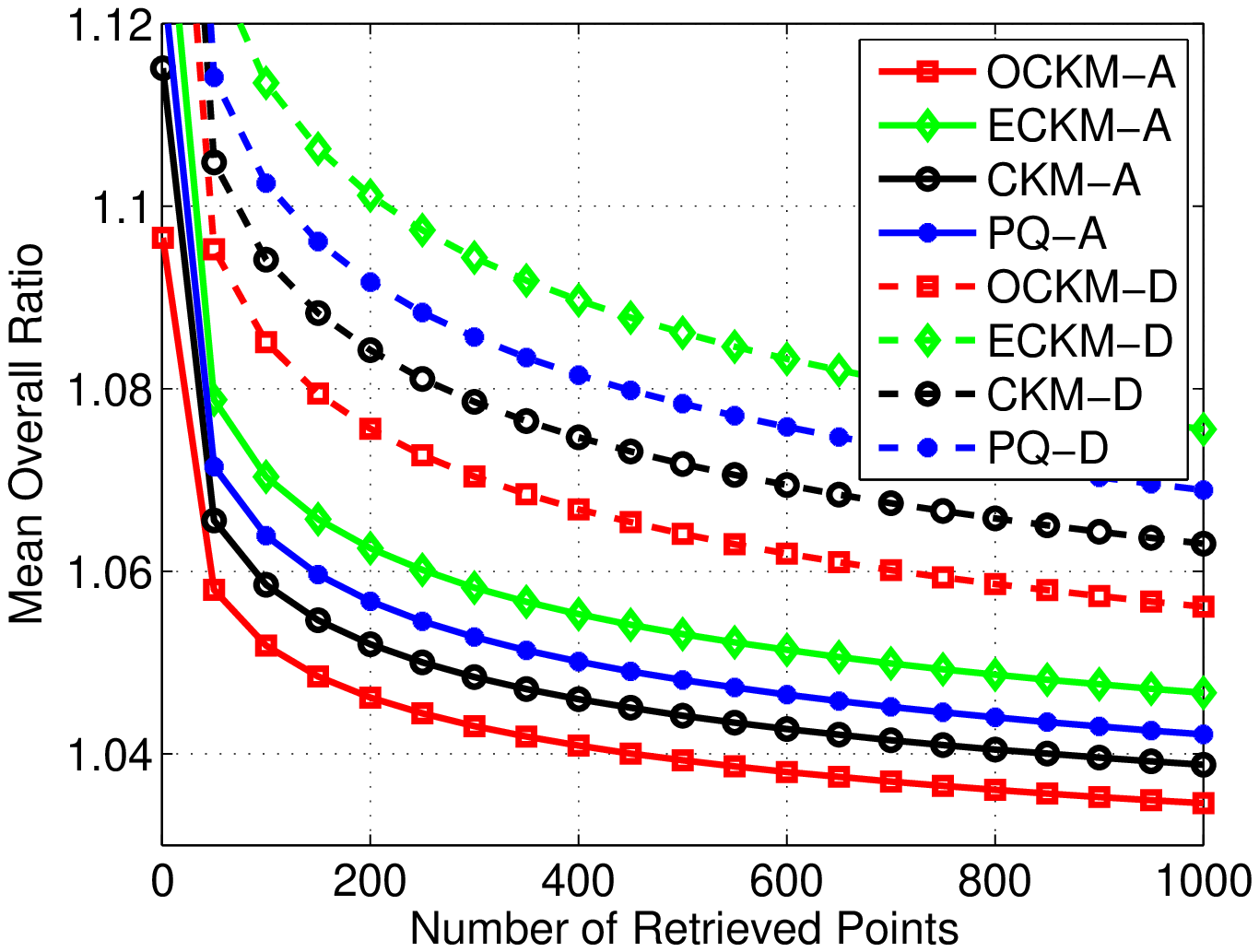} &
\includegraphics[width = 0.32\linewidth]{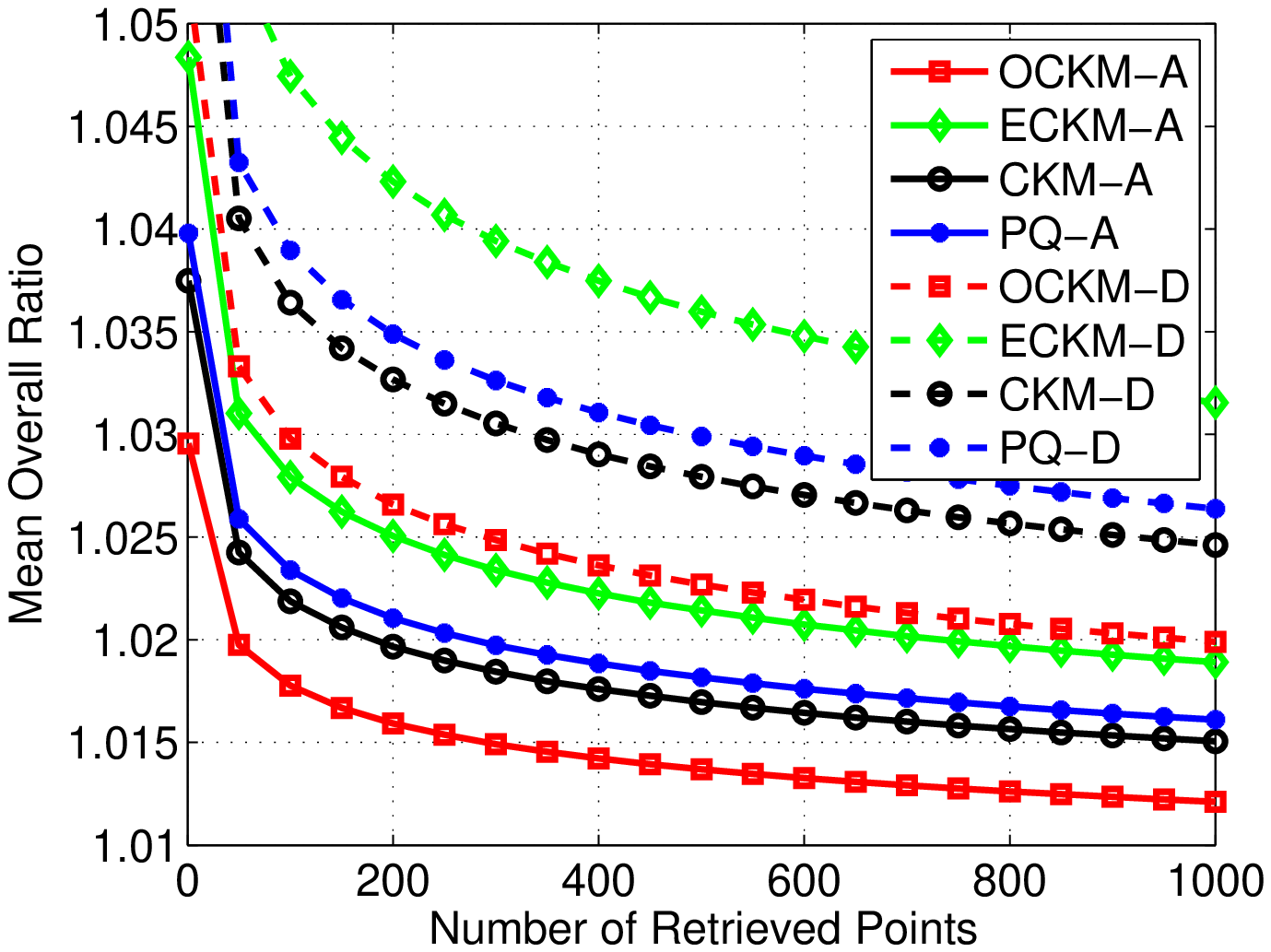} \\
\includegraphics[width = 0.32\linewidth]{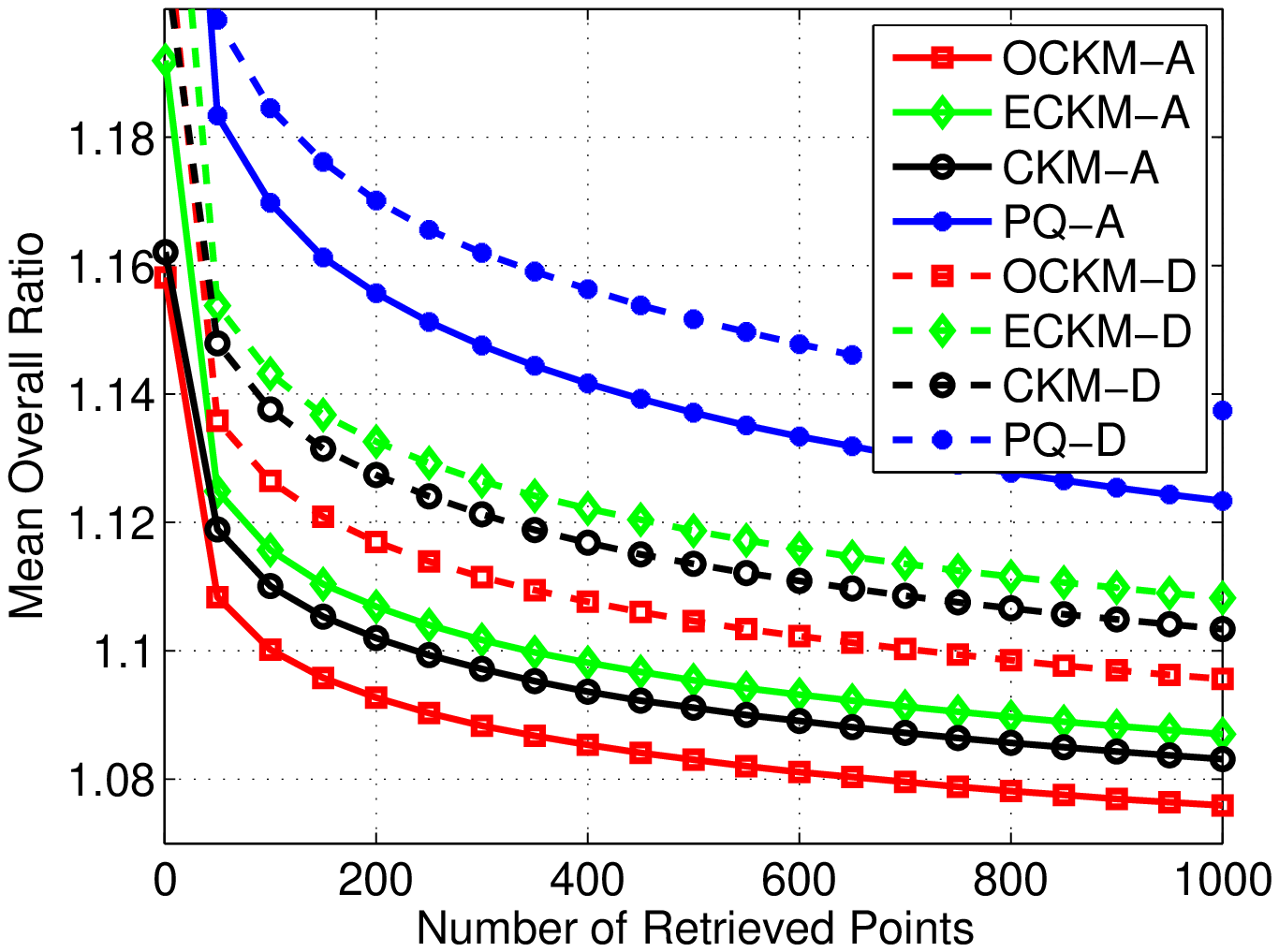} &
\includegraphics[width = 0.32\linewidth]{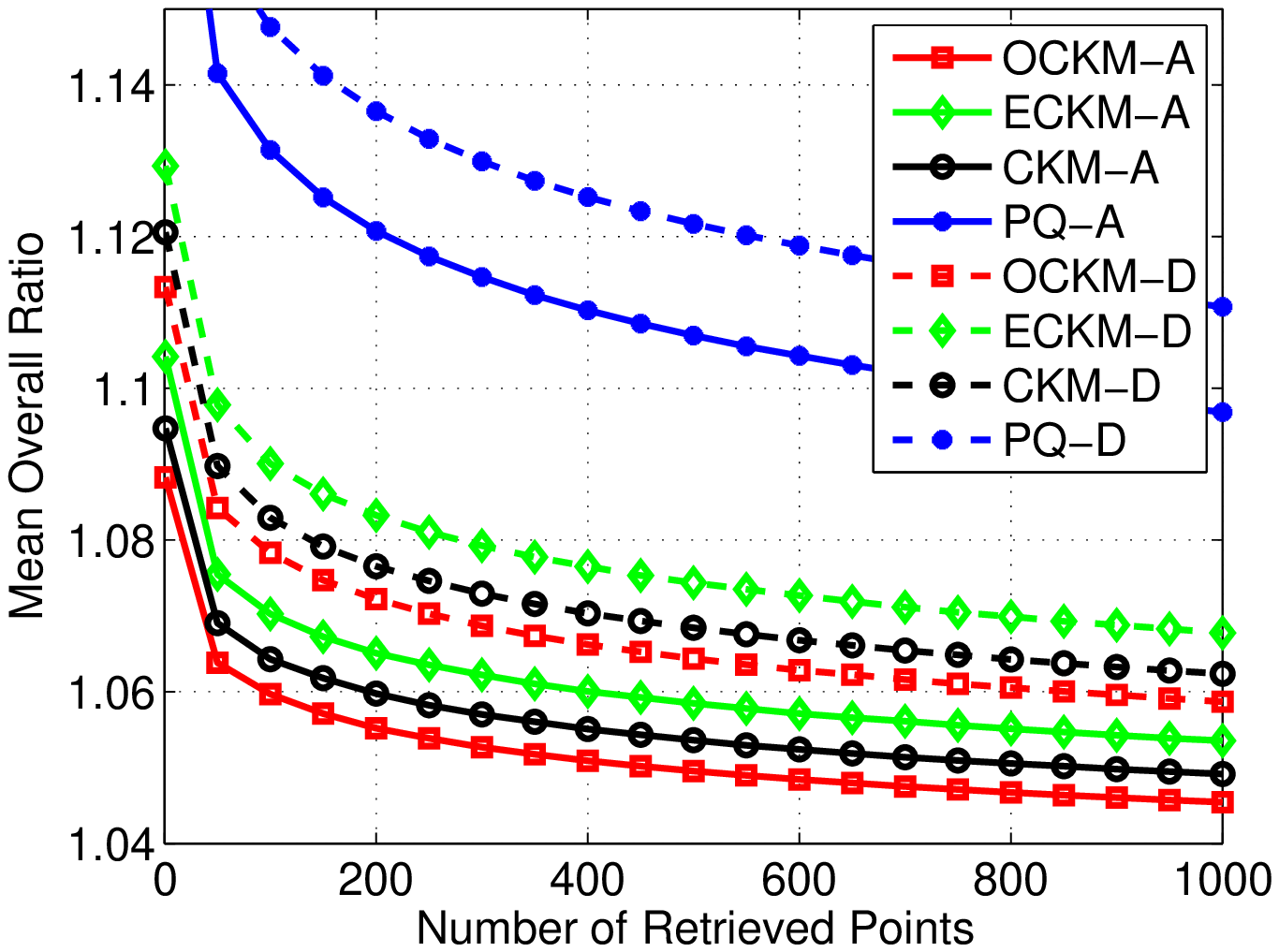} &
\includegraphics[width = 0.32\linewidth]{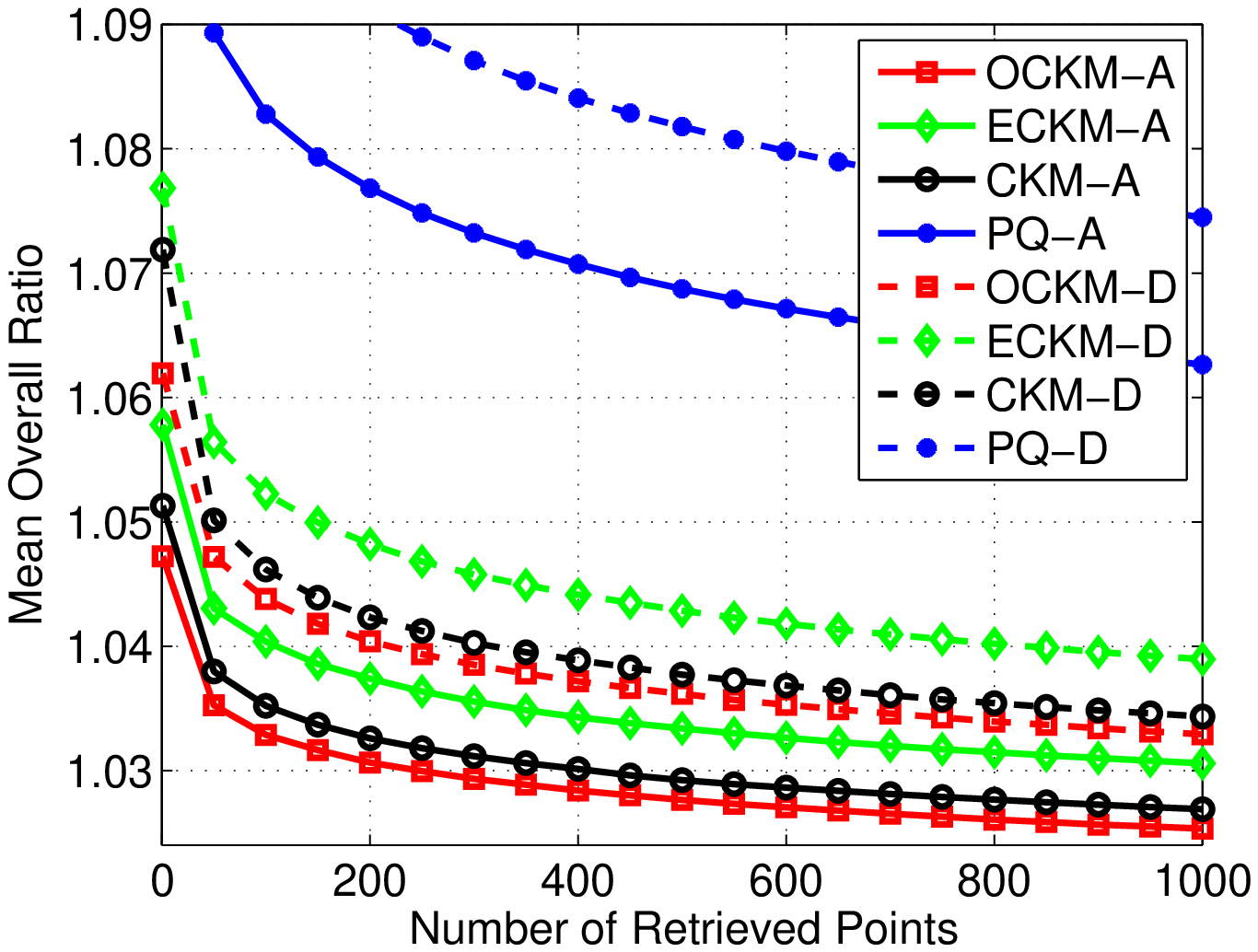} \\
\end{tabular}
\caption{Mean overall ratio for ANN search. The results in the first row are on SIFT1M while those in the second row are on GIST1M. The first column corresponds to the code length $32$; the second to $64$; and the third to $128$.}
\label{fig:both_32_64_128}
\end{figure*}

%
%
%

\section{Conclusion}
\label{sec:conclusion}
In this paper, we proposed the Optimized Cartesian $K$-Means (OCKM) algorithm
to encode the high-dimensional data points
for approximate nearest neighbor search.
The key idea of OCKM
is that in each subspace multiple sub codebooks are generated
and each sub codebook contributes one sub codeword 
for encoding the subvector.
The benefit is that it reduces the quantization error
with comparable query time
under the same code length.
The theoretical analysis
and experimental results
show that
OCKM achieves superior performance for ANN search over state-of-the-art approaches.

\section*{Acknowledgment}
This work was partially supported by the National Basic Research Program
of China (973 Program) under Grant 2014CB347600
and ARC Discovery Project DP130103252.

%
%
%

\ifCLASSOPTIONcaptionsoff
  \newpage
\fi

\bibliographystyle{abbrv}
\bibliography{ockmeans}

\begin{IEEEbiography}[{\includegraphics[width=1in,height=1.25in,clip,keepaspectratio]{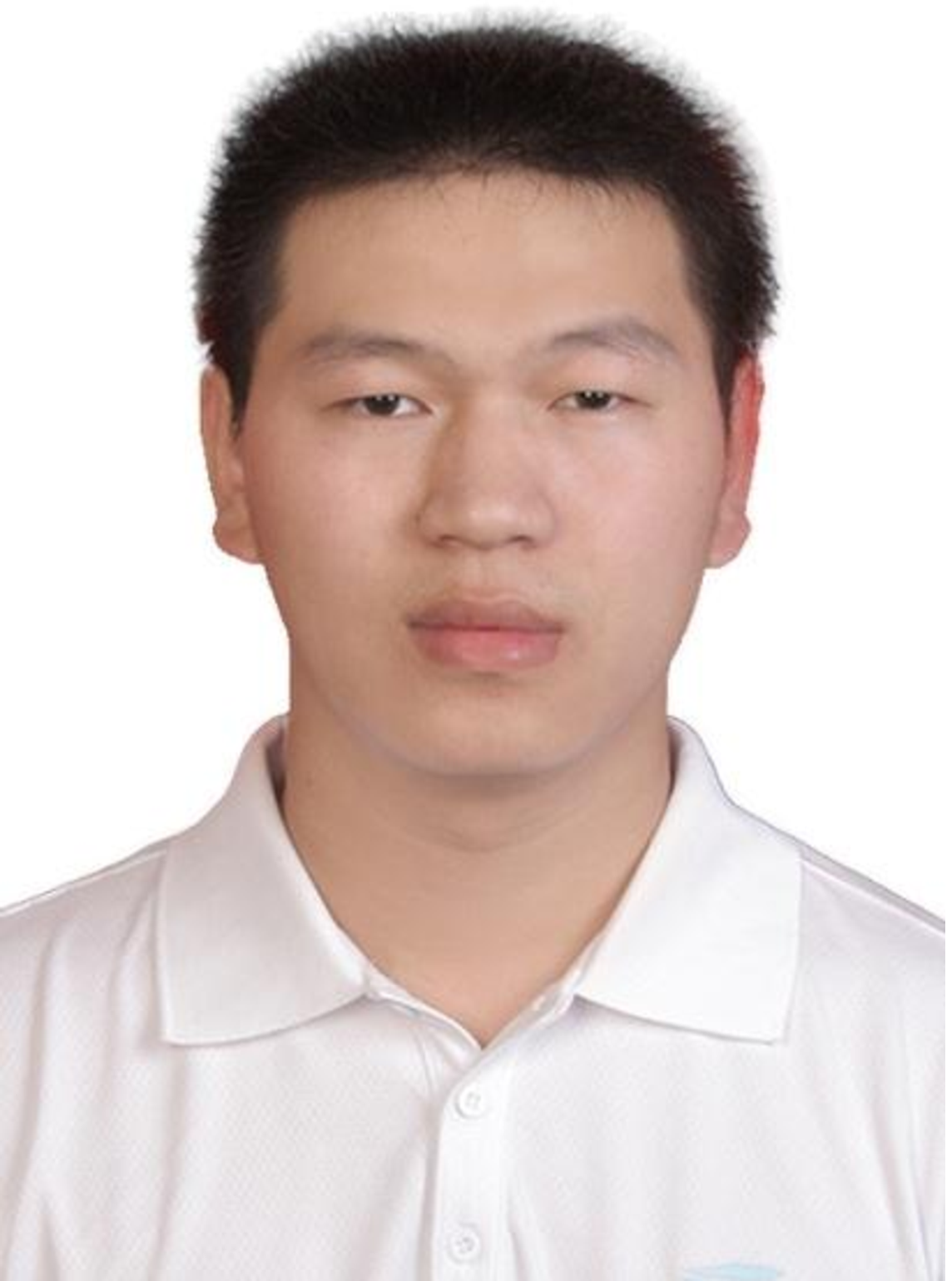}}]{Jianfeng Wang} received his
B.Eng. degree from the Department of Electronic Engineering
and Information Science in the University of Science and
Technology of China (USTC) in 2010. Currently, he is
a PhD student in MOE-Microsoft Key Laboratory of
Multimedia Computing and Communication, USTC. His
research interests include multimedia retrieval, machine learning
 and its applications.
\end{IEEEbiography}

\begin{IEEEbiography}[{\includegraphics[width=1in,height=1.25in,clip,keepaspectratio]{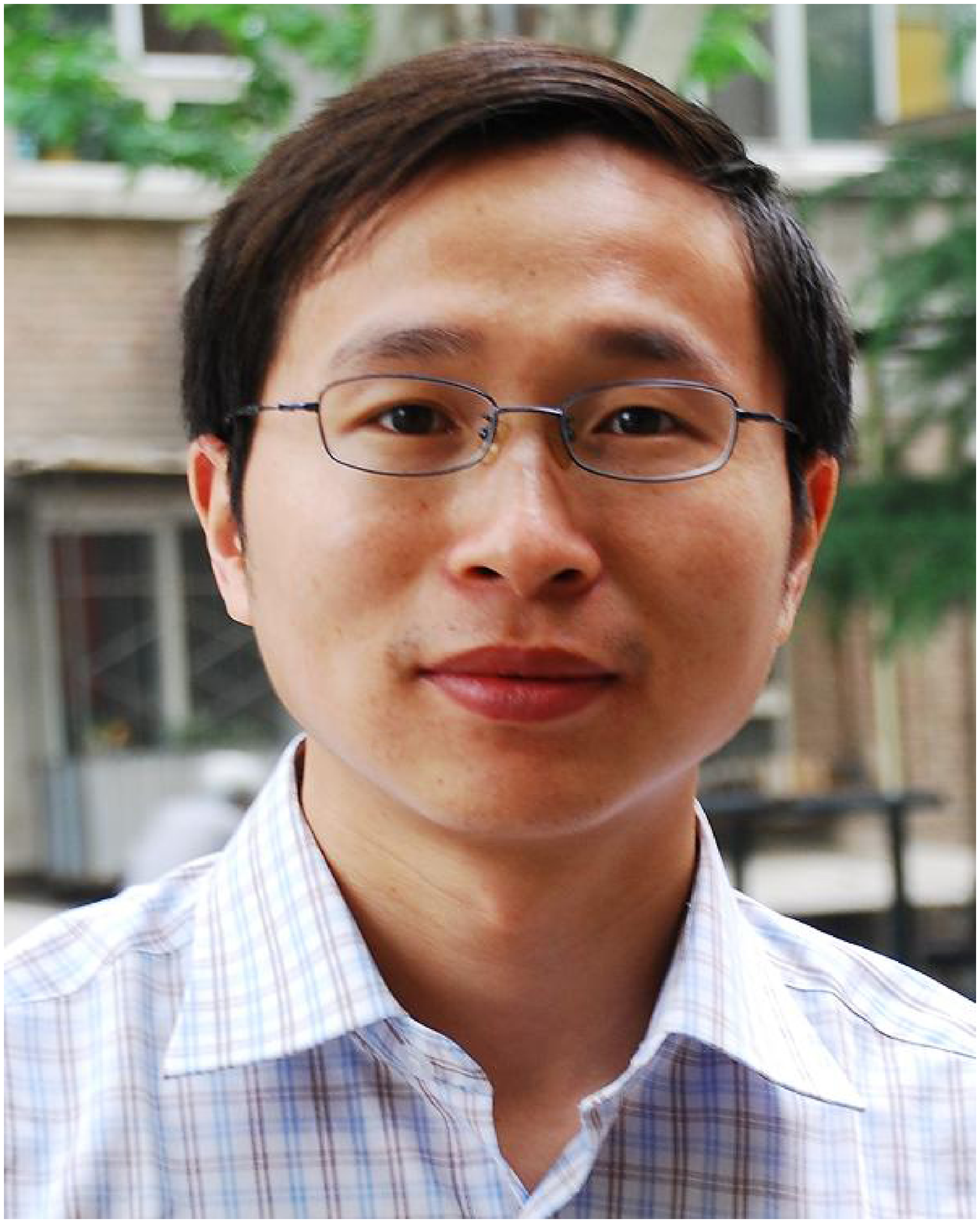}}]
{Jingdong Wang} received the BSc and MSc degrees in Automation from Tsinghua University,
Beijing, China, in 2001 and 2004, respectively,
and the PhD degree in Computer Science from the Hong Kong University of Science and Technology,
Hong Kong, in 2007.
He is currently a Lead Researcher at the Visual Computing Group,
Microsoft Research, Beijing, P.R. China.
His areas of interest include computer vision, machine learning, and multimedia search.
At present, he is mainly working on the Big Media project, including large-scale indexing and
clustering, and Web image search and mining. He is an editorial board member of Multimedia Tools and Applications.
\end{IEEEbiography}


\begin{IEEEbiography}[{\includegraphics[width=1in,height=1.25in,clip,keepaspectratio]{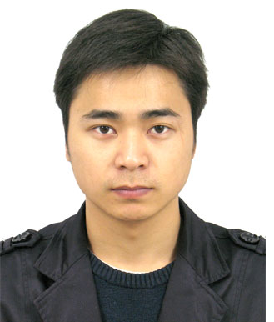}}]
{Jingkuan Song} is currently a Research Fellow in University of Trento, Italy. He received his Ph.D degree from The University of Queensland, and BS degree in Software Engineering from University of Electronic Science and Technology of China. His research interest includes large-scale multimedia search, computer vision and machine learning.
\end{IEEEbiography}

\begin{IEEEbiography}[{\includegraphics[width=1in,height=1.25in,clip,keepaspectratio]{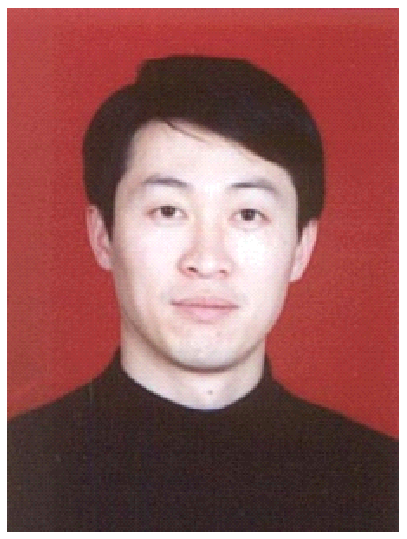}}] {Xin-Shun Xu} received his M.S. and Ph.D. Degrees in computer science from Shandong University, China, in 2002, and Toyama University, Japan, in 2005, respectively. He joined the School of Computer Science and Technology at Shandong University as an associate professor in 2005, and joined the LAMDA group of the National Key Laboratory for Novel Software Technology, Nanjing University, China, as a postdoctoral fellow in 2009. Currently, he is a professor of the School of Computer Science and Technology at Shandong University, and the leader of MIMA (Machine Intelligence and Media Analysis) group of Shandong University. His research interests include machine learning, information retrieval, data mining, bioinformatics, and image/video analysis.
\end{IEEEbiography}

\begin{IEEEbiography}[{\includegraphics[width=1in,height=1.25in,clip,keepaspectratio]{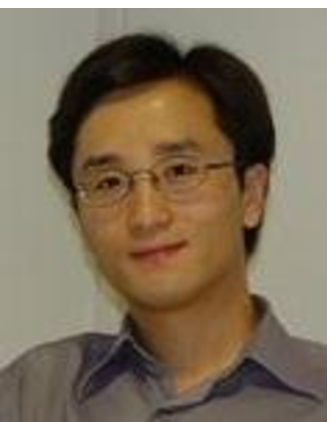}}]{Heng Tao Shen} is a Professor of Computer Science in School of Information Technology and Electrical
Engineering, The University of Queensland. He obtained his B.Sc. (with 1st class Honours) and Ph.D.
from Department of Computer Science, National University of Singapore in 2000 and 2004 respectively.
He then joined the University of Queensland as a Lecturer and became a Professor in 2011. His
research interests include Multimedia/Mobile/Web Search and Big Data Management. He is the winner
of Chris Wallace Award for outstanding Research Contribution in 2010 from CORE Australasia. He is an Associate Editor of IEEE TKDE, and will serve as a PC Co-Chair for ACM Multimedia 2015.
\end{IEEEbiography}

\begin{IEEEbiography}[{\includegraphics[width=1in,height=1.25in,clip,keepaspectratio]{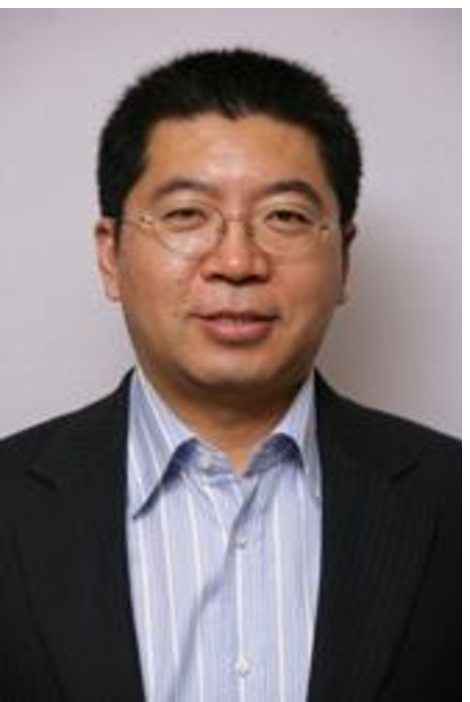}}]
{Shipeng Li} joined and helped to found Microsoft Research's Beijing lab in May 1999. He is now a Principal Researcher and Research Area Manager coordinating multimedia research activities in the lab. His research interests include multimedia processing, analysis, coding, streaming, networking and communications. From Oct. 1996 to May 1999, Dr. Li was with Multimedia Technology Laboratory at Sarnoff Corporation as a Member of Technical Staff. Dr. Li has been actively involved in research and development in broad multimedia areas and international standards. He has authored and co-authored 6 books/book chapters and 280+ referred journal and conference papers. He holds 140+ granted US patents.
Dr. Li received his B.S. and M.S. in Electrical Engineering (EE) from the University of Science and Technology of China (USTC), Hefei, China in 1988 and 1991, respectively. He received his Ph.D. in EE from Lehigh University, Bethlehem, PA, USA in 1996. He was a faculty member in Department of Electronic Engineering and Information Science at USTC in 1991-1992.
Dr. Li received the Best Paper Award in IEEE Transaction on Circuits and Systems for Video Technology (2009). Dr. Li is a Fellow of IEEE.

\end{IEEEbiography}

\end{document}